\def\eqref#1{equation~\ref{#1}}
\def\1{\bm{1}}
\DeclareMathAlphabet{\mathsfit}{\encodingdefault}{\sfdefault}{m}{sl}
\SetMathAlphabet{\mathsfit}{bold}{\encodingdefault}{\sfdefault}{bx}{n}
\DeclareMathOperator*{\argmin}{arg\,min}
\definecolor{cvprblue}{rgb}{0.21,0.49,0.74}
\definecolor{ForestGreen}{RGB}{34,139,34}
\definecolor{Crimson}{RGB}{220,20,60}
 \newtheorem{proposition}{Proposition}
\def\ours{\texttt{V-Reason}\@\xspace}
\def\ourslite{\texttt{V-Reason(Lite)}\@\xspace}
\title{Video Reasoning Without Training}
\author{
Deepak Sridhar$^{1,2}$\thanks{Equal contribution. Work done when Deepak Sridhar was an intern at Qualcomm AI Research. $^\dag$Qualcomm AI Research is an initiative of Qualcomm Technologies, Inc.} \quad
Kartikeya Bhardwaj$^{1}$\footnotemark[1] \quad
Jeya Pradha Jeyaraj$^{1}$ \quad
Nuno Vasconcelos$^{2}$\\
Ankita Nayak$^{1}$ \quad
Harris Teague$^{1}$\\
$^{1}$Qualcomm AI Research$^\dag$\quad
$^{2}$University of California, San Diego\\
\small{\texttt{desridha@ucsd.edu, kbhardwa@qti.qualcomm.com}}
}
\begin{document}
\maketitle

\begin{abstract}
Video reasoning using Large Multimodal Models (LMMs) relies on costly reinforcement learning (RL) and verbose chain-of-thought, resulting in substantial computational overhead during both training and inference. Moreover, the mechanisms that control the thinking process in these reasoning models are very limited. In this paper, we use the entropy of the model’s output distribution as a signal to study and guide reasoning behavior. We discover that high-quality models exhibit a characteristic pattern of \textit{micro-exploration} and \textit{micro-exploitation} cycles, followed by a later entropy peak (i.e., longer thinking) and a lower final entropy, indicating more deliberate exploration and confident convergence (i.e., avoid excessive randomness while the model is exploring or thinking through an answer). We then use these novel, theoretically-grounded insights to introduce \ours{} (\underline{V}ideo-\underline{Reason}), an inference-time optimization method that adapts the value cache of the LMM through a lightweight, trainable controller. Our proposed controller is guided by an entropy-based objective, to tune the model's behavior directly at inference, without using any RL or supervised fine-tuning. Our experiments show that \ours{} significantly outperforms the base instruction-tuned models on many video reasoning datasets, narrowing the gap with RL models to within \textbf{0.6\%}  accuracy on average. We achieve this without any training, while offering efficiency benefits: \ours{} uses \textbf{58.6\%} fewer tokens than the RL model.\vspace{-4mm}

\end{abstract}

\section{Introduction}\label{sec:intro}
Reasoning with generative AI models, such as Large Language or Large Multimodal Models (LLMs/LMMs), has gained substantial attention recently. This capability is implemented by asking the model to ``think'' about a problem, before making a final recommendation, and can be accomplished by several approaches, including Chain-of-Thought (CoT)~\citep{wei2022cot}, supervised fine-tuning with CoT (CoT-SFT)~\citep{liu2025acereasonnemotron11advancingmath,feng2025videor1}, or reinforcement learning (RL) with a \textit{thinking-before-answering} format~\citep{guo2025deepseekr1, openai2024openaio1card}. Although initial progress was shown mainly for LLMs, such ideas have now been extended to video reasoning problems~\citep{feng2025videor1,li2025videochatr1,zhang2025videott,cheng2025vstar,wang2024sokbench} by exploiting LMMs. Although successful, CoT-SFT, and RL-based methods tend to be highly computationally intensive, both for training and inference, due to the long thinking traces that they tend to produce. These costs are particularly exacerbated for video, due to the high resolution and multiple frames involved in the reasoning process. 

Recently, RL-based reasoning has been viewed as a sampling process~\citep{song2025outcome, zhao2025learning} to more effectively \textit{search} for reasoning traces from \textit{pretrained knowledge} of the baseline model. 
This was also recently noted in~\cite{yue2025does} where the authors study whether RL-based models really expand reasoning abilities beyond the base model.  
We thus ask whether this search for reasoning traces can be performed in a \textit{training-free} way. 

Specifically, we consider the following \textbf{key questions}: 
\begin{enumerate}
    \item Can inference-time metrics characterize the thinking process of video reasoning models? If yes, can these metrics differentiate between higher- and lower-quality LMMs?
    \item Can such metrics be used to formulate novel inference-time optimization objectives for video reasoning without requiring additional model training?
    \vspace{-1mm}
\end{enumerate}

\begin{figure*}[t] 
  \centering
   \includegraphics[width=1.0\linewidth]{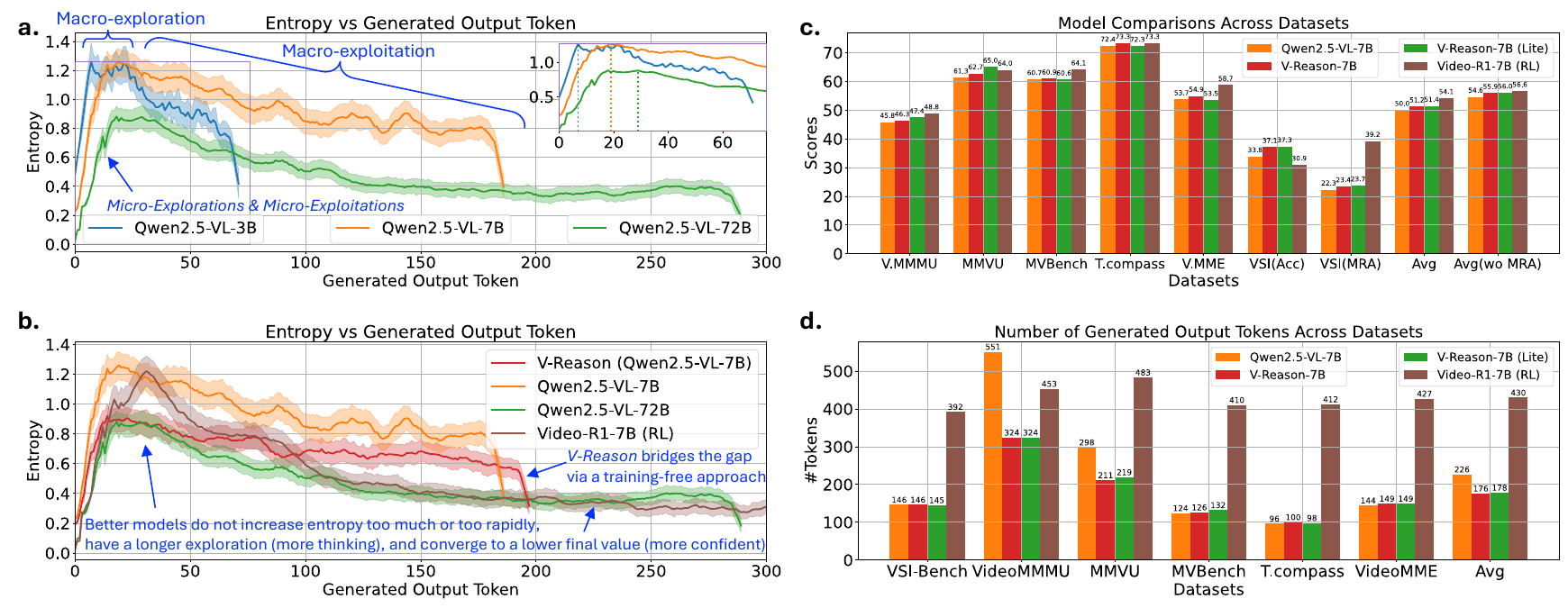}
\caption{\ours{} Overview: (a)~Entropy of the output distribution averaged over the MMVU~\citep{zhao2025mmvu} dataset of 625 videos. We see clear macro-exploration and macro-exploitation phases with bigger, more accurate models showing lower overall entropy (lower and later peak, followed by a lower final entropy during the macro-exploitation). We use these key insights to adapt a model's behavior in a training-free way using an inference-time optimization technique. (b)~Applying \ours{} on Qwen2.5-VL-7B-Instruct makes its entropy behave more similarly to the larger or the RL-trained Video-R1-7B model. 
(c)~\ours{} bridges the accuracy gap with Video-R1-7B to within 0.6\% for tasks well-represented in the pretrained model's knowledge (Avg. without MRA). 
(d)~\ours{}  also significantly reduces the total output tokens compared to all models due to a dedicated entropy minimization phase.}
   \label{fig:entropyIntro}
\end{figure*}

To answer these questions, we first analyze the model's output distribution entropy at generation step $t$ computed as $H_t=-\sum_{i\in \mathcal{V}} p_t^i \log p_t^i$ ($\mathcal{V}$ refers to vocabulary of the model) for instruction-tuned LMMs of various sizes, as shown in Fig.~\ref{fig:entropyIntro}(a). This analysis reveals two broad trends: (\textit{i})~all models exhibit a pattern of increasing and then decreasing entropy as tokens are generated, and (\textit{ii})~larger, more accurate models exhibit both a \textit{later entropy peak and a lower final entropy}, indicating more confident reasoning (see Fig.~\ref{fig:entropyIntro}(a) and its inset for Qwen-VL-Instruct models).

The first trend above can suggest a formal definition of the ``thinking'' in terms of output distribution entropy. As the model starts generating a response, it seems to be uncertain and \textit{searches} through multiple solution trajectories, which can explain the increase in its output entropy. We denote this as the \textit{macro-exploration} phase. As the generation progresses, the model seems to start identifying the correct thinking thread, and becomes increasingly certain about a solution, resulting in the gradual reduction in the entropy of its output. We denote this as the \textit{macro-exploitation} phase. 

The second trend seems to suggest that entropy should \textit{not} increase too rapidly during the macro-exploration phase. In fact, all models go through a series of \textit{micro-exploration} and \textit{micro-exploitation} cycles (characterized by small increases and decreases of entropy) during both macro phases of the thinking process; see Fig.~\ref{fig:entropyIntro}(a) shaded regions. A delayed entropy peak can suggest that better reasoning models explore more alternative answers, leading to higher accuracy~\citep{wei2022cot, guo2025deepseekr1, openai2024openaio1card}. In this context, more and/or longer cycles of micro-exploration and micro-exploitation can lead to ``longer thinking,'' with lower and delayed entropy peaks and lower final entropy. 
Fig.~\ref{fig:entropyIntro}(b, brown line) shows that the above two observations also hold for an RL-trained Video-R1-7B model~\citep{feng2025videor1}.  This model has a slightly lower and much later entropy peak than the Qwen2.5-VL-7B-Instruct baseline model, which was used to train Video-R1-7B, and the final entropy is very close to that of the significantly larger Qwen2.5-VL-72B-Instruct model. In contrast, smaller models (e.g., 3B) peak early (i.e., think less) and converge prematurely to lower entropy leading to confident but incorrect answers (see Fig.~\ref{fig:entropyIntro}(a)). It suggests that a shorter thinking phase can lead to lower reasoning accuracy. 

Building on these observations, we propose \ours{}, a \textit{training-free} inference-time optimization method that directly modulates the micro-exploration and micro-exploitation behavior of the baseline instruction-tuned models to enhance their thinking capabilities. \ours{} introduces a small, trainable controller that adapts the LMM value cache using an entropy-based objective, requiring no supervision, dataset, or RL signal. The objective encourages more pronounced cycles of micro-exploration and micro-exploitation, by inducing the model to more strongly increase/decrease entropy during these cycles, followed by a final entropy minimization phase. This process prevents entropy from rising too fast during macro-exploration and enables the model to achieve a lower final entropy during macro-exploitation, thus making the baseline model behave more like a stronger reasoning model (see Fig.~\ref{fig:entropyIntro}(a,b)). In effect, it encourages the model to think longer but more efficiently by exploring alternatives without producing unnecessary tokens. This mirrors findings from prior studies~\citep{xu2025chain,wu2025when,wang2025wait} showing that shorter yet well-targeted reasoning chains can equal or outperform longer, verbose ones. To enhance efficiency, we further introduce a “lite” variant, \ourslite{}, which reduces memory and computational overhead by evicting 50\% of the lowest-norm video tokens from the KV-cache.

Our results demonstrate that \ours{} and \ourslite{} bridge the gap between baseline instruction tuned models and RL-trained models in terms of accuracy (see Fig.~\ref{fig:entropyIntro}(c)). We empirically observe that this approach is most effective when the solution lies within the pretrained model's knowledge space (e.g., classification tasks we study) but requires better search strategies to surface it. For tasks where this knowledge underrepresented in pretraining (e.g., regression tasks like VSI-MRA), training-based approaches remain more effective.
Moreover, our dedicated entropy minimization phase enables the final solution trajectory to converge faster, thus producing considerably fewer output tokens on average compared to the RL models (see Fig.~\ref{fig:entropyIntro}(d)) which also helps the inference times. Thus, \ours{} and \ourslite{} bridge the gap with the RL-trained model while producing significantly fewer output tokens. 
In summary, the paper makes the following \textbf{key contributions}:
\begin{enumerate}
    \item To our knowledge, the problem of \textit{ video reasoning without training} has not been previously addressed in the literature. We are the first to introduce a training-free, purely inference-time optimization method for video reasoning without SFT or RL.
    \item We propose \ours{} that enhances the micro-exploration and micro-exploitation cycles of the baseline instruction-tuned models to achieve better accuracy. We also provide simple theoretical results for our method.
    \item We show that \ours{} with our proposed entropy based objective induces a lower and delayed entropy peak during macro-exploration and a lower final entropy during macro-exploitation, similar to the patterns observed for the reasoning models trained by RL (see Fig.~\ref{fig:entropyIntro}(b)). 
    \item Extensive experiments on six video reasoning benchmarks show that \ours{} achieves an average improvement of \textbf{1.4\%} over the base model, narrowing the gap to within \textbf{0.6\%} of the RL-trained Video-R1-7B model (see Fig.~\ref{fig:entropyIntro}(c)). We further show gains across model sizes ranging from 3B to 32B and even up to 72B LMMs. We also demonstrate that \ours{} is complementary to multiple SOTA decoding methods. 
    \item Our approach can lead to more efficient reasoning by significantly reducing the total number of output tokens generated (see Fig.~\ref{fig:entropyIntro}(d)). \ours{} produces \textbf{21.4\%} fewer tokens on average than the base Qwen2.5-7B-Instruct model, and \textbf{58.6\%} fewer tokens on average than the RL-trained Video-R1-7B model. This means that its wall-clock inference time is competitive to the base model and up to \textbf{37\%} lower than Video-R1-7B on average.
\end{enumerate}

\section{Related Work}\label{sec:rel}
\noindent\textbf{Reasoning in Large Language Models.}
Reasoning in LLMs can be achieved by chain-of-thought prompting, instruction-tuning with CoTs, or reward-based fine-tuning with RL. Existing work on prompting primarily relies on eliciting better CoT reasoning paths from the model~\citep{kojima2022zeroshotcot,yasunaga2023analogicalreasoners,zhou2023leastmost}. While these methods have achieved high accuracies, few-shot prompting techniques are task-specific, less generalizable and require manual prompt designs for each task. Better prompting techniques require extensive prompt engineering and result in inconsistent performances~\citep{zhou2023humanlevelpromptengineers}. Overall, prompting techniques are limited by model-specific and task-specific tuning~\citep{yang2024llmasoptimizers} making them less favorable. Recent works endeavor to improve the CoT prompting by verification~\citep{golovneva2023pathfinder} that verifies and controls the intermediate steps generated by the model. Such methods still require CoT prompting and are computationally intensive due to the additional verification steps involved. 

Instruction-tuning and reward-based fine-tuning are alternative ways to elicit reasoning in LLMs when additional compute is available for supervision~\citep{magister2023teachingsmallreason, huang2023llmsselfimprove,chung2022scaling}. However, these techniques require supervised CoT data and expensive RL stages to make the model compliant to produce the reasoning or thinking process in specified formats for easy extraction of the answers. Different from the above methods, we seek an efficient framework to elicit reasoning in LMMs at inference without any supervised data or training.

\noindent\textbf{Video Reasoning.}
Video Reasoning methods have been introduced recently~\citep{feng2025videor1,chen2025scalingRLLongVideos} inspired by the success of LLM reasoning. Video-R1~\citep{feng2025videor1} introduces a temporal GRPO loss to specifically improve temporal reasoning capabilities along with a new dataset for training. VideoChat-R1~\citep{li2025videochatr1} introduces a chat model with spatio-temporal reasoning abilities by training with GRPO and rule-based rewards. TinyLLaVA~\citep{zhang2025tinyllava} shows that reasoning can be effective even for smaller models, using a Qwen-3B-VL model trained with standard GRPO and RL-based reward losses. All of the above methods rely on expensive training to elicit reasoning in LMMs for videos; for instance, training TinyLLaVA on 50K samples takes $\sim$3 days on 4 A100 GPUs, and the cost scales prohibitively for larger models (7B, 32B). To overcome this, we propose an efficient framework that leverages inference-time optimization to elicit the reasoning traces from pretrained LMMs, achieving higher accuracy with fewer output tokens compared to RL-trained models.

\noindent\textbf{Inference-time Reasoning Methods.}
Inference-time optimization methods~\citep{chefer2023attend,rout2025rbmodulation} have gained popularity in diffusion models for improving control and consistency. Recent works have explored eliciting reasoning capabilities from LLMs at inference time~\citep{wang2024cotnoprompt,fu2025deepconf}, aiming to reduce computational cost and improve interpretability. 
Decoding strategies such as CoT-Decoding \citep{wang2024cotnoprompt} modifies token selection to surface latent reasoning traces, while 
ThinkLogit \citep{zhang2025logit} manipulates logits with guidance from a smaller preference model to induce longer reasoning chains.
In parallel, sampling-based methods such as min-p \citep{Nguyen2024MinPSampling} and the concurrent approach top-h  \citep{Potraghloo2025TopHDecoding} 
restrict candidate tokens based on probability thresholds or rank cutoffs, improving fluency but without explicitly targeting reasoning. 
Our method is orthogonal to these approaches: rather than filtering outputs, we optimize the model’s intrinsic token distributions during inference and show consistent improvements even when combined with min-p and top-h sampling-based methods.

Other line of works utilize steering to modify the model's behavior for reasoning tasks~\citep{azizi2025asc,belitsky2025cache}. ASC~\citep{azizi2025asc} modifies the hidden states of the model to compress CoT traces by relying on a reasoning-trained model to distinguish concise from verbose reasoning. KV-Cache Steering~\citep{belitsky2025cache} presents a one-shot intervention in the key-value cache to induce reasoning in small LLMs with steering vectors derived from GPT-4o~\citep{openai2024gpt4o}.
In contrast to these works that have \textit{indirect reliance on a reasoning-trained} model, we propose an inference-optimization technique that modulates the value-cache to elicit reasoning using the model’s own entropy as intrinsic feedback\textit{ without any reliance on external model or data}.

\section{Proposed Approach: \ours{}}\label{sec:approach}
In this section, we describe the proposed \ours{}, its inference-time optimization objectives, and supporting theoretical results.
We then address practical aspects, including redundancy reduction in video tokens to lower memory costs, and introduce \ourslite{} for improved efficiency.

\begin{figure*}[t]
  \centering
   \includegraphics[width=0.8\linewidth]{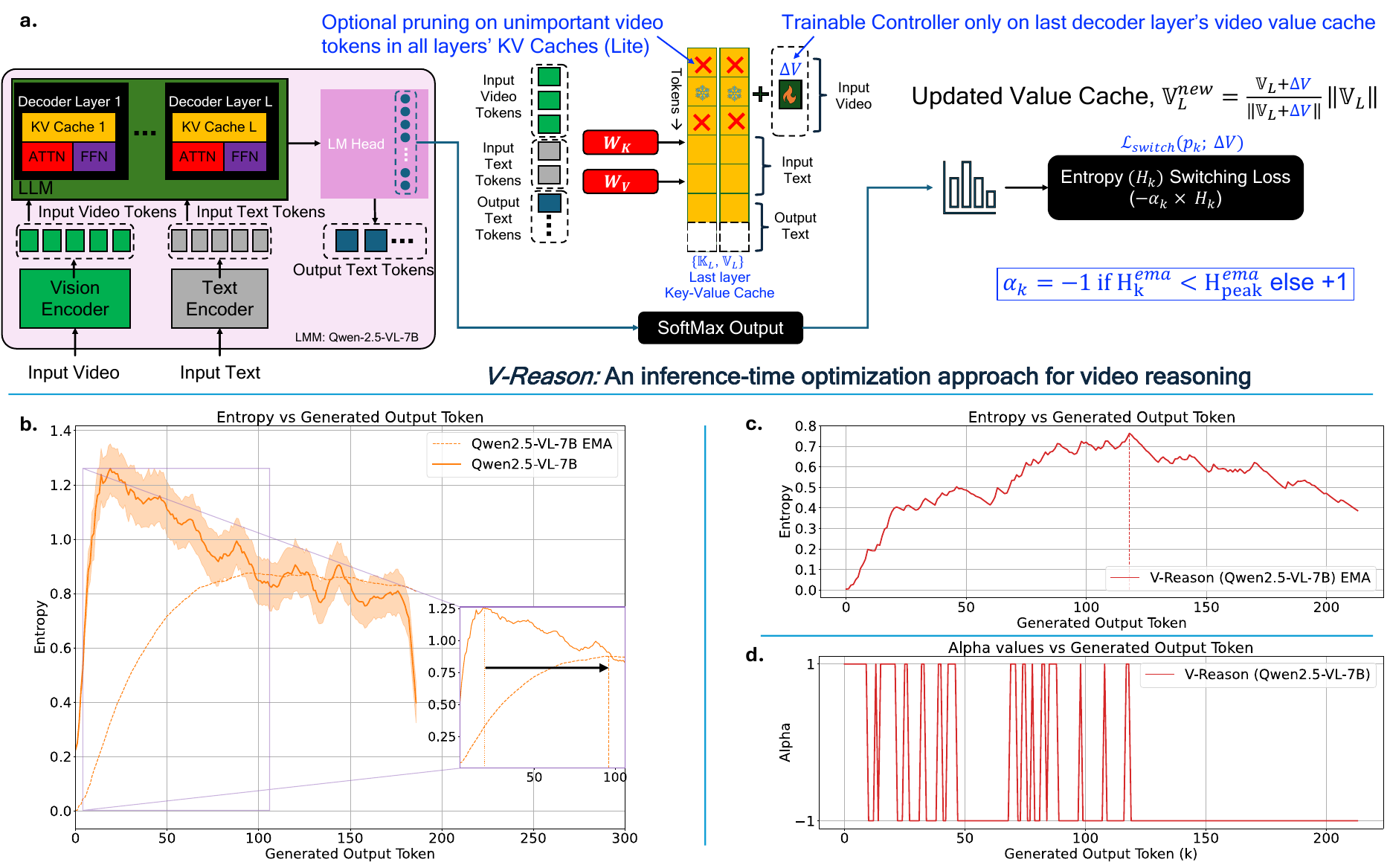}
   \caption{(a) Proposed approach for video reasoning in a training-free manner using entropy-based objective. \ours{} uses an inference optimization method to modulate the values cache of the last decoder layer with an entropy switching loss ($\mathcal{L}_{switch}$) to improve the video reasoning performance. (b) The average entropy plot for Qwen-2.5-VL-7B on the MMVU dataset along with its EMA. The inset depicts the shift in the entropy maxima for the EMA curve denoted by the black arrow (c) EMA entropy plot of \ours{} for a single sample that shows the micro-exploration and micro-exploitation within the macro-exploration phase before the entropy maxima and macro-exploitation phase after. (d) Plot showing the $\alpha_k$ switching in \ours{} for the corresponding example in (c) that ensures bounded entropy updates without a rapid increase.} 
   \label{fig:approach}
\end{figure*}

\subsection{Inference-time Optimization} 

Modifying the reasoning behavior of a pre-trained LMM requires two components: a set of parameters which are modified or added to the model to elicit reasoning, and an optimization objective, to optimize those parameters. As discussed in Section~\ref{sec:intro}, the key goals for \ours{} are to: (a)~decrease the rate of growth of the output distribution entropy during macro-exploration, by \textit{controlling} the model behavior so as to promote more pronounced cycles of micro-exploration and micro-exploitation during the output generation, and (b)~reduce the final entropy during macro-exploitation. To accomplish these objectives, we propose a value-cache controller and a novel inference-time optimization objective.

\noindent\textbf{Controller Parameters.} We propose to augment the model with the {\it Value-Cache Controller} shown in Fig.~\ref{fig:approach}(a). This controller, denoted as $\Delta V$, is a small, trainable parameter added to the value cache $\mathbb{V}_L$ of the \textit{last} decoder layer of the model, specifically at the video token locations. All other model layers remain frozen and no modifications are applied to the input or output text tokens. In our experiments, tuning only the value cache of the last layer was sufficient and modifying earlier layers or tuning key-value cache did not yield additional improvements. The controller $\Delta V$ is initialized to zero and updated at every $k^{th}$ generated output token ($k > 1$) via the inference-time optimization method discussed below. Note that no optimization is performed for the first token, as that is when the KV-Cache prefilling happens for all layers.
To prevent the controller from destabilizing the pretrained model, we introduce the normalization
\begin{equation}
    \mathbb{V}^{new}_L = \frac{\mathbb{V}_L + \Delta V}{||\mathbb{V}_L + \Delta V||} \cdot ||\mathbb{V}_L||.
    \label{eq:norm}
\end{equation}
This normalization preserves the original magnitude $||\mathbb{V}_L||$ of the cache vector, ensuring that the controller $\Delta V$ introduces only a directional update. This helps maintain a stable forward pass, ensuring consistent output token generation. This normalization is inspired by well-known methods like Weight Normalization~\citep{salimans2016weight, srebro2005rank}, which have been shown to have good optimization properties and are beneficial for recurrent and generative models.

\noindent\textbf{Optimization Objective.} In Section~\ref{sec:intro} and Fig.~\ref{fig:entropyIntro}(a), we suggested that the effectiveness of a reasoning model is related to the entropy of its output token distribution. While all reasoning models exhibit a period of macro-exploration, where entropy increases, and macro-exploitation, where it decreases, better models have a macro-exploration stage characterized by lower and delayed entropy maxima. We further posited that this is largely driven by cycles of micro-exploration and micro-exploitation, which prevent the entropy from increasing or decreasing too rapidly.
We interpret these cycles as periods where the model temporarily increases the output entropy (exploration) to allow alternative reasoning paths, needed to escape from a  current unpromising path. The model then pursues a new path in more detail (exploitation), leading to a decrease of entropy and the potential realization that this new path is itself not promising. The cycle is then repeated. We hypothesize that stronger reasoning models are more decisive in their patterns of micro-exploration and exploitation, which leads to more and/or stronger cycles, thus reducing the rate of {\it macro} entropy increase. This leads to lower and delayed entropy peaks. It follows that the reasoning power of a model should increase if it is encouraged to have more {\it vigorous} micro-exploration/exploitation cycles. After reaching the entropy peak of macro-exploration, the model switches to macro-exploitation, where it pursues a reasoning path in detail to produce an answer, which leads to a decrease of the output entropy. We note that the \textit{pattern of delayed peaks and lower final entropy and richer micro-cycles are signatures of better reasoning than absolute entropy values}.
In this work, we propose to reinforce this behavior by optimizing the value cache controller $\Delta V$ with the {\it Entropy Switching Loss}:
\begin{equation}\label{eq:entropy_loss}
    \begin{aligned}
 \mathcal{L}_{switch}( \Delta V)\  & = -\alpha_k H_k  = \alpha_k \sum_{i\in |\mathcal{V}|}p^i_k(\Delta V) \text{log}( p^i_k(\Delta V))
    \end{aligned}
\end{equation}
where $p_k$ is the output distribution (softmax after the LM-Head) for every $k^{th}$ token generated ($k>1$), $H_k$ the entropy of this distribution, and $\alpha_k\in \{-1, +1\}$ is a coefficient that switches between $-1$ and $+1$. The minimization of this loss encourages an increase in the entropy (micro-exploration) when $\alpha_k =1$ and a decrease (micro-exploitation) when $\alpha_k =-1$. Hence, setting $\alpha_k =1$ ($\alpha_k =-1$) during the micro-exploration (micro-exploitation) periods, encourages the model to be more decisive in its micro-exploration/exploitation cycles. It is also possible to explore other behaviors, e.g., using this procedure to reinforce micro-cycles during macro-exploration, followed by minimizing entropy alone ($\alpha_k = -1$) during macro-exploitation. 

To implement this, we first compute the exponential moving average (EMA) of the entropy at each generation step $t$ (different from $k$, which is the optimization step for the value-cache controller)
\begin{equation}
H_t^{ema} = \beta H_{t-1} + (1-\beta)H_t
\end{equation}
where $t > 1$, $\beta$ is a smoothing coefficient (set to 0.98), and $H_0$ is the entropy of the first token which is a small value\footnote{The baseline instruction-tuned models are certain about the very first predicted token; it is usually just the \texttt{<think>} token, even without RL or CoT-SFT, because of the instruction we give to the model.}. The EMA is a low-pass filtered version of the raw entropy, and thus much less noisy, as shown in Fig.~\ref{fig:approach} (b). It achieves a good trade-off between oscillating too much, due to noise, and switching between increasing and decreasing entropy during micro-cycles, as shown in Fig.~\ref{fig:approach} (c). 
Also, because it grows much slower than the raw entropy, following the EMA naturally leads to a lower and delayed entropy peak, as shown in Fig.~\ref{fig:approach} (b). The switching coefficient  $\alpha_k$ is then defined to follow the EMA, 
\begin{equation}
    \alpha_k = 
    \begin{cases}
        +1 \ \text{if} 
        H_k^{ema} \geq H_{peak}^{ema}\\
        -1 \ \text{if} \ H_k^{ema} < H_{peak}^{ema}
    \end{cases}\\
    \label{eq:alphak}
\end{equation}
where, $H_k^{ema}$ is the EMA at the current step, and  $H_{peak}^{ema}$ the  maximum value of EMA observed before step $k$. This is illustrated in Figure~\ref{fig:approach} (d). It encourages the entropy to (\textit{i})~increase when the EMA is larger than the last peak, i.e., the EMA is increasing, and to (\textit{ii})~decrease otherwise, i.e., the EMA is decreasing, therefore reinforcing the natural micro-cycles of the model.
Once the EMA reaches a global maximum, $\alpha_k$ becomes $-1$ and macro-exploitation begins. This global maximum of entropy can also be seen as a more formal definition of the end of the ``thinking'' process. A detailed description of our method is given in Algorithm 1.

Fig.~\ref{fig:approach} (c) shows the EMA entropy plot of \ours{} for a single sample. It is clear that there are more and stronger local minima and maxima depicting the micro-exploration/exploitation cycles before the entropy maxima. This slows the entropy growth during macro-exploration, leading to a delayed peak and substantially more exploration than by the original model. Once  the global maximum of the EMA is reached, $\alpha_k$ becomes $-1$ and the model enters the macro-exploitation stage, where it is encouraged to decrease entropy until it arrives at a solution. Overall, the optimization promotes 1) more and/or longer cycles of micro-exploration and micro-exploitation during the macro-exploration stage, which lead to ``longer thinking,'' with lower and delayed entropy peaks, and 2) a stronger emphasis on entropy minimization during the macro-exploitation stage, which leads to faster convergence to a lower final entropy. 

We observe that the optimization of \ours{} encourages the model to arrive at the final solution significantly faster than CoT-SFT and RL models, which often produce verbose outputs. This can be seen in Fig.~\ref{fig:entropyIntro} (d). Since computation is tied to the length of the output sequence, this also results in significantly more efficient inference than those models. Hence, despite the extra computation needed for the optimization, \ours has more efficient inference overall (section~\ref{sec:eff}).
Finally, since \ours{} exploits the natural variation in entropy, it adaptively determines how much exploration and exploitation is required by each sample. This makes it robust and adaptable to various datasets and types of video reasoning problems (see Section~\ref{sec:exp}).

\noindent\textbf{Theoretical Guarantees.} We provide theoretical guarantees that the entropy updates induced by our Entropy Switching Loss remain stable and that our EMA-based objective bounds the oscillations in entropy. The formal statements are below, with assumptions and proofs discussed in Supplementary A. 
\begin{proposition}[Bounded entropy updates]
\label{prop:boundedness_main}
Under mild smoothness and boundedness assumptions, one gradient step of size $\eta$ on the Entropy Switching Loss changes entropy by at most
\[
|H_{t+1}-H_t| \le \eta C + o(\eta),
\]
and the process $\{H_t\}$ remains within the compact interval $[0,\log n]$.
\end{proposition}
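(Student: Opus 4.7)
The plan is to decompose the claim into two separate pieces: the compact containment $H_t \in [0,\log n]$, which is purely distributional, and the per-step stability bound, which follows from a first-order Taylor argument on the loss.

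For the containment, I would simply observe that at every generation step $H_t$ is the Shannon entropy of a softmax distribution $p_t$ over a vocabulary of size $n = |\mathcal{V}|$. Standard properties of entropy (non-negativity, and maximization at the uniform distribution) give $0 \le H_t \le \log n$ unconditionally, independent of $\Delta V$, of the optimizer schedule, or of the switching rule on $\alpha_k$. This step carries no technical content; it is what makes the state space compact.

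For the per-step update I would view $H$ as a smooth composition $\Delta V \mapsto \mathbb{V}_L^{new} \mapsto \text{logits} \mapsto p \mapsto H(p)$, where the first arrow is the normalization in Equation~\ref{eq:norm}. Since $\mathcal{L}_{switch} = -\alpha_k H$ with $\alpha_k \in \{-1,+1\}$, one gradient step produces $\Delta V_{t+1} = \Delta V_t + \eta\,\alpha_k\,\nabla_{\Delta V} H$. A first-order Taylor expansion of $H$ at $\Delta V_t$ then gives
\[
H_{t+1} - H_t \;=\; \eta\,\alpha_k\, \|\nabla_{\Delta V} H\|^2 + R,
\]
with $|R| \le \tfrac{L}{2}\|\Delta V_{t+1}-\Delta V_t\|^2 = O(\eta^2)$ by Lipschitz smoothness of $\nabla H$. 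Assuming a uniform gradient bound $\|\nabla_{\Delta V} H\| \le G$, this yields $|H_{t+1}-H_t| \le \eta G^2 + O(\eta^2) = \eta C + o(\eta)$ with $C = G^2$, which is the claimed inequality.

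The hard part will be justifying the uniform gradient bound $\|\nabla_{\Delta V} H\| \le G$, because $\nabla_p H$ contains terms of the form $1+\log p^i$ that diverge as $p^i \to 0$, so without structural control on the softmax outputs the bound can fail. I expect to close this by leveraging Equation~\ref{eq:norm}: the normalization pins $\|\mathbb{V}_L^{new}\| = \|\mathbb{V}_L\|$ at every step, so the value-cache input to the remaining layers, and hence the logits fed to the LM-head, stay on a bounded set. This confines $p$ to a compact subset of the interior of the simplex, making both $\nabla_p H$ and the Jacobian of $\Delta V \mapsto p$ uniformly bounded along the entire trajectory. Packaging these bounds, together with standard Lipschitz smoothness of the decoder's final layer, into the single constants $G$ and $L$ would be the main preliminary lemma to state in the appendix before invoking the Taylor step above.
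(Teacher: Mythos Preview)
Your proposal is correct and follows essentially the same route as the paper: the trivial containment $H_t\in[0,\log n]$ plus a per-step bound obtained by controlling $\|\nabla_{\Delta V}H\|$ through the chain rule (logits $\to$ softmax $\to$ entropy), with the $\log p^i$ divergence handled by keeping $p$ bounded away from zero. The paper simply \emph{assumes} the latter and invokes Lipschitz continuity of $H$ directly (yielding $|H_{t+1}-H_t|\le \eta L_H$ with no explicit remainder), whereas your Taylor-plus-$L$-smoothness argument makes the $o(\eta)$ term explicit and your plan to derive the probability lower bound from the normalization in Equation~\ref{eq:norm} is more self-contained than what the paper actually proves.
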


\begin{proposition}[EMA smoothing bounds oscillations]
\label{prop:ema_main}
For $\beta\in(0,1)$ close to $1$, the EMA acts as a low-pass filter: \emph{(\textit{i})}~it attenuates high-frequency fluctuations of $H_t$, \emph{(\textit{ii})}~delays the attainment of entropy maxima, and \emph{(\textit{iii})}~enforces bounded oscillations by switching $\alpha_k$ to $-1$ once a new global EMA maximum is reached.
\end{proposition}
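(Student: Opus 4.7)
My plan is to treat the EMA recursion $H_t^{ema}=\beta H_{t-1}^{ema}+(1-\beta)H_t$ as a first-order linear IIR low-pass filter and verify the three sub-claims separately, stitching them together with the pointwise bound from Proposition~\ref{prop:boundedness_main}. First I would unroll the recursion into the explicit convolution
\[
H_t^{ema}=(1-\beta)\sum_{k=0}^{t-1}\beta^k H_{t-k}+\beta^t H_0^{ema},
\]
which presents the EMA as an exponentially-weighted average over an effective window of length $\sim 1/(1-\beta)$. All three statements will be derived from this closed form.

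For sub-claim (\emph{i}), the convolution above has transfer function $G(z)=(1-\beta)/(1-\beta z^{-1})$, whose magnitude response
\[
|G(e^{i\omega})|=\frac{1-\beta}{\sqrt{1-2\beta\cos\omega+\beta^2}}
\]
equals $1$ at $\omega=0$ and decreases monotonically on $[0,\pi]$, reaching $(1-\beta)/(1+\beta)\to 0$ as $\beta\to 1$. This is the precise sense in which high-frequency components of $H_t$ are suppressed. For (\emph{ii}), I would instead use the one-step identity $H_t^{ema}-H_{t-1}^{ema}=(1-\beta)(H_t-H_{t-1}^{ema})$: the sign of the EMA increment agrees with the sign of $H_t-H_{t-1}^{ema}$, so the EMA can only reach a local maximum at a time where the raw entropy has already dropped below the current EMA. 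Modelling a single thinking episode by a unimodal raw trajectory peaking at $t^\ast$, a short monotone-region induction then forces the EMA peak $t^\ast_{ema}>t^\ast$, with gap on the order of the filter's group delay $\beta/(1-\beta)$ at $\omega=0$, giving the delayed-maxima statement.

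Claim (\emph{iii}) is where I expect the main obstacle, because ``bounded oscillations'' must be made precise in the presence of the discrete switching rule~\eqref{eq:alphak}. My plan is to couple Proposition~\ref{prop:boundedness_main} with that rule: the moment $H_k^{ema}$ drops strictly below the running peak $H_{peak}^{ema}$, $\alpha_k$ flips to $-1$ and the Entropy Switching Loss~\eqref{eq:entropy_loss} reduces to $-H_k$, whose gradient step \emph{decreases} $H_k$ by at most $\eta C+o(\eta)$ per update. The EMA, being a convex combination of past $H_k$'s, inherits the bound $H_t^{ema}\in[0,\log n]$; combined with the attenuation inequality $H_t^{ema}-H_{peak}^{ema}\le(1-\beta)(H_t-H_{peak}^{ema})$ valid just after the switch, this caps any post-peak excursion at first order in $\eta$.

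The delicate part will be ruling out a rapid rebound of $H_t$ after the switch that could push $H_t^{ema}$ back above $H_{peak}^{ema}$ and re-trigger micro-exploration. I would handle this by choosing $\eta$ small enough (already implicit in the smoothness assumptions behind Proposition~\ref{prop:boundedness_main}) so that the per-step entropy change is dominated by the $(1-\beta)$ attenuation factor, yielding a clean amplitude bound $|H_t^{ema}-H_{peak}^{ema}|=O(\eta)$ for every $t$ past the global EMA maximum. The remaining steps — verifying that the global max is eventually attained (a compactness argument on $[0,\log n]$) and that the switching rule is well-defined at ties — are essentially mechanical.
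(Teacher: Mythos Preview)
Your plan follows the same three-part skeleton as the paper's proof, but the paper's argument is a three-sentence sketch: it simply asserts that the recursion is a causal low-pass filter, that averaging over the past delays and attenuates peaks of $H_t$, and that once the global EMA maximum is reached $\alpha_k$ becomes $-1$, turning the loss into pure entropy minimization and thereby bounding post-peak oscillations. No transfer function, no magnitude response, no group-delay estimate, no coupling with the step-size bound of Proposition~\ref{prop:boundedness_main}, and no discussion of rebounds or tie-breaking appears in the paper at all.

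So your route differs in depth rather than in kind: you supply the quantitative signal-processing content (the explicit $|G(e^{i\omega})|$, the one-step increment identity, the $\beta/(1-\beta)$ delay scale, the $(1-\beta)$ attenuation inequality) that the paper only gestures at, and you explicitly confront issues the paper elides. What your approach buys is an actual proof with checkable constants and a clear statement of the step-size regime needed; what the paper's approach buys is brevity. One small caution for when you write this up: your sentence that the gradient step ``decreases $H_k$ by at most $\eta C+o(\eta)$'' blends a sign claim with the magnitude bound of Proposition~\ref{prop:boundedness_main}; you should state separately that with $\alpha_k=-1$ the first-order change is $-\eta\|\nabla_{\Delta V}H_k\|^2\le 0$, since Proposition~\ref{prop:boundedness_main} alone only controls the magnitude, not the direction.
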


\subsection{Efficiency Considerations: \textbf{\ourslite{}}}\label{sec:eff}
Video reasoning with LMMs can have high GPU memory costs due to a large number of input video tokens. Adding inference-time optimization to these models at first sight can seem inefficient, as it can further increase inference costs. However, \ours{} has several properties that counteract this hypothesis. First, the controller is only added to the decoder cache of the last model layer. This significantly reduces the memory overhead of storing activations for backpropagation, which reduces to the trainable controller $\Delta V$ and a few feature maps (last decoder layer's value cache, attention output, feedforward layers, and LM-Head). Second, and most important, because \ours{} usually arrives at the final solution with significantly less tokens as shown in Fig.~\ref{fig:entropyIntro}(d), both its inference time and computation are much lower than models trained to think.

Nevertheless, we explore an additional avenue for efficiency. Before performing the \ours{} optimization, we \textit{optionally} prune 50\% of the video tokens from the KV-Cache of all decoder layers, a variant we refer to as \ourslite{}. This significantly reduces the KV-Cache overhead and also halves the size of the trainable controller. Interestingly, we found that for some datasets this also slightly improves \ours{} reasoning performance (perhaps by reducing noise due to unimportant video tokens). To prune out unimportant video tokens, we measure the mean value of the $l_2$ norm of video tokens across all value caches and eliminate the lowest 50\% video tokens from both Key and Value Caches of all decoder layers. The trainable controller is then only added to the remaining video tokens in the last decoder layer. The new value update is $\mathbb{V}^{new}_L = \frac{\mathbb{V}^{pruned}_L + \Delta V}{||\mathbb{V}^{pruned}_L + \Delta V||} \cdot ||\mathbb{V}_L||$,
which still maintains the magnitude of the unpruned video value cache from~\eqref{eq:norm}. We empirically find that this reduces the error due to pruning and enables the \ourslite{} models to achieve much higher accuracies than when the value cache norm is altered. Algorithm 2 in Supplementary provides the pseudo-code for the lite variant.

\section{Experiments}\label{sec:exp}
\noindent\textbf{Implementation Details.}\label{sec:implementation_main}
All experiments use pytorch version \texttt{2.5.1+cu121}, transformers version \texttt{4.52.4}, and a single NVIDIA-A100 GPU. Following~\citep{feng2025videor1}, we use multinomial sampling with (\texttt{temperature=0.1}, \texttt{top-p=0.001} i.e., deterministic) for our experiments unless otherwise noted. See Supplementary B for more details.

\noindent\textbf{Video Reasoning.} We evaluate \ours{} on the Qwen-2.5-VL-Instruct~\citep{Qwen2.5-VL} model series under 16/32 frames settings (from~\citep{feng2025videor1}) and maximum video pixels px$\times28\times28$ with px=256/128, respectively. Similar to~\citep{feng2025videor1}, \ours is evaluated across 6 video reasoning benchmarks,  covering two tasks, Multiple-Choice QA and Regression, evaluated by classification accuracy and Mean Relative Accuracy (MRA) respectively. We report the average accuracy with and without MRA to illustrate the model’s performance across different task formulations.

\subsection{Video Reasoning Benchmark Results}

\begin{table*}[t]
\centering
\scriptsize
\caption{Comparison of performance of different models on video reasoning benchmarks. \#F denotes the number of frames and px denotes the maximum video pixels used, px$\times28\times28$.}
\setlength{\tabcolsep}{2pt}
\resizebox{\textwidth}{!}{%
\begin{tabular}{l c c c c c c c c c}
\toprule
\textbf{Model} & \textbf{\#F/px} & \textbf{VSI-Bench} & \textbf{VideoMMMU} & \textbf{MMVU} & \textbf{MVBench} & \textbf{TempCompass} & \textbf{VideoMME} & Avg & Avg\\
& & (Acc/MRA) & & (mc) & & & (wo sub) &  & (wo mra) \\
& & ~\citep{yang2025thinking_vsibench} & ~\citep{hu2025videommmu}& ~\citep{zhao2025mmvu} & ~\citep{li2024mvbench}& ~\citep{liu2024tempcompass}& ~\citep{fu2024videomme} &  & \\
\midrule
GPT-4o~\citep{openai2024gpt4o}  & -- & 34.0 & 61.2 & 75.4 & -- & -- & 71.9 & --& --\\
LLaMA-VID~\citep{li2023llamavid}  & -- & -- & -- & -- & 41.9 & 45.6 & -- & --& --\\
VideoLLaMA2~\citep{cheng2024videollama2}  & -- & -- & -- & 44.8 & 54.6 & -- & 47.9 & --& --\\
LongVA-7B~\citep{zhang2024longva}  & -- & 29.2 & 23.9 & -- & -- & 56.9 & 52.6 & --& --\\
VILA-1.5-8B~\citep{lin2023vila}  & -- & 28.9 & 20.8 & -- & -- & 58.8 & -- & --& --\\
Video-UTR-7B~\citep{lin2025videoutr}  & -- & -- & -- & -- & 58.8 & 59.7 & 52.6 & --& --\\
LLaVA-OneV-7B~\citep{li2024llavaonevision}  & -- & 32.4 & 33.8 & 49.2 & 56.7 & -- & 58.2 & --& --\\
\midrule

Qwen2.5-VL-3B~\citep{Qwen2.5-VL} & 32/128 & 24.3 (31.6/17.0) & 32.3 & 49.3 & 52.5 & 28.1 & 48.1 & 37.0 & 40.3 \\
\textbf{\ours{}-3B \texttt{(Lite)}} & 32/128 & \textbf{26.3 (32.2/20.4)} \textbf{\textcolor{ForestGreen}{[+0.6/+3.4]}} & \textbf{33.9} \textbf{\textcolor{ForestGreen}{[+1.6]}} & \textbf{50.9} \textbf{\textcolor{ForestGreen}{[+1.6]}} & \textbf{53.2} \textbf{\textcolor{ForestGreen}{[+0.7]}} & \textbf{29.1} \textbf{\textcolor{ForestGreen}{[+1.0]}} & \textbf{49.0} \textbf{\textcolor{ForestGreen}{[+0.9]}} & \textbf{38.3} \textbf{\textcolor{ForestGreen}{[+1.3]}} & \textbf{41.3} \textbf{\textcolor{ForestGreen}{[+1.0]}}\\

\textbf{\ours{}-3B} & 32/128 & \textbf{24.7 (31.9/17.5)} \textbf{\textcolor{ForestGreen}{[+0.3/+0.5]}} & \textbf{33.2} \textbf{\textcolor{ForestGreen}{[+0.9]}} & \textbf{50.2} \textbf{\textcolor{ForestGreen}{[+0.9]}} & \textbf{52.9} \textbf{\textcolor{ForestGreen}{[+0.4]}} & \textbf{30.4} \textbf{\textcolor{ForestGreen}{[+2.3]}} & \textbf{48.8} \textbf{\textcolor{ForestGreen}{[+0.7]}} & \textbf{37.9} \textbf{\textcolor{ForestGreen}{[+0.9]}} & \textbf{41.2} \textbf{\textcolor{ForestGreen}{[+0.9]}}\\

\midrule
Qwen2.5-VL-7B~\citep{Qwen2.5-VL} & 16/256 & 26.4 (31.4/21.4) & 47.6 & 59.5 & 60.4 & 72.2 & 50.5 & 49.0& 53.6\\
\textbf{\ours{}-7B \texttt{(Lite)}} & 16/256 & \textbf{27.9 (34.1/21.6)} \textbf{\textcolor{ForestGreen}{[+2.7/+0.2]}} & 47.6 \textbf{\textcolor{black}{[+0.0]}} & \textbf{63.4} \textbf{\textcolor{ForestGreen}{[+3.9]}} & \textbf{60.8} \textbf{\textcolor{ForestGreen}{[+0.4]}} & 71.6 \textbf{\textcolor{Crimson}{[-0.6]}} &  \textbf{51.1} \textbf{\textcolor{ForestGreen}{[+0.6]}} & \textbf{49.9} \textbf{\textcolor{ForestGreen}{[+0.9]}}& \textbf{54.6} \textbf{\textcolor{ForestGreen}{[+1.0]}}\\
\textbf{\ours{}-7B} & 16/256 & \textbf{28.5 (34.5/22.6)} \textbf{\textcolor{ForestGreen}{[+3.1/+1.2]}} & \textbf{47.8} \textbf{\textcolor{ForestGreen}{[+0.2]}} & \textbf{62.2} \textbf{\textcolor{ForestGreen}{[+2.7]}} & \textbf{61.0} \textbf{\textcolor{ForestGreen}{[+0.6]}} & \textbf{72.3} \textbf{\textcolor{ForestGreen}{[+0.1]}} & \textbf{51.1} \textbf{\textcolor{ForestGreen}{[+0.6]}} & \textbf{50.2} \textbf{\textcolor{ForestGreen}{[+1.2]}}& \textbf{54.8} \textbf{\textcolor{ForestGreen}{[+1.2]}}\\
\midrule
Video-R1-7B~\citep{feng2025videor1}  & 16/256 & \textbf{33.8} (30.5/\textbf{37.0}) & \textbf{47.8} & \textbf{64.2} & \textbf{63.9} &  72.2 &  \textbf{57.2} & \textbf{53.3}& \textbf{56.0}\\
\midrule
Qwen2.5-VL-7B~\citep{Qwen2.5-VL} & 32/128 & 28.1 (33.8/22.3) & 45.8 & 61.3 & 60.7 & 72.4 & 53.7 & 50.0& 54.6\\
\textbf{\ours{}-7B \texttt{(Lite)}} & 32/128 & \textbf{30.5 (37.3/23.7)} \textbf{\textcolor{ForestGreen}{[+3.5/+1.4]}} & \textbf{47.4} \textbf{\textcolor{ForestGreen}{[+1.6]}} & \textbf{65.0} \textbf{\textcolor{ForestGreen}{[+3.7]}} & \textbf{60.6} \textbf{\textcolor{Crimson}{[–0.1]}} & \textbf{72.4} \textbf{\textcolor{black}{[+0.0]}} & \textbf{53.5} \textbf{\textcolor{Crimson}{[–0.2]}} & \textbf{51.4} \textbf{\textcolor{ForestGreen}{[+1.4]}} & \textbf{56.0} \textbf{\textcolor{ForestGreen}{[+1.4]}}\\

\textbf{\ours{}-7B} & 32/128 & \textbf{30.3 (37.1/23.4)} \textbf{\textcolor{ForestGreen}{[+3.3/+1.1]}} & \textbf{46.3} \textbf{\textcolor{ForestGreen}{[+0.5]}} & \textbf{62.7} \textbf{\textcolor{ForestGreen}{[+1.4]}} & \textbf{60.9} \textbf{\textcolor{ForestGreen}{[+0.2]}} & \textbf{73.3} \textbf{\textcolor{ForestGreen}{[+0.9]}} & \textbf{54.9} \textbf{\textcolor{ForestGreen}{[+1.2]}} & \textbf{51.2} \textbf{\textcolor{ForestGreen}{[+1.2]}} & \textbf{55.9} \textbf{\textcolor{ForestGreen}{[+1.3]}}\\

\midrule
Video-R1-7B~\citep{feng2025videor1}  & 32/128 & \textbf{35.6} (30.9/\textbf{39.2}) & \textbf{48.8} & \textbf{64.0} & \textbf{64.1} & \textbf{73.3} &  \textbf{58.7} & \textbf{54.1}& \textbf{56.6}\\ %
\bottomrule
\label{tab:video_comparison}
\end{tabular}
}
\end{table*}

Table~\ref{tab:video_comparison} presents a comparison of \ours{} with Qwen2.5-VL-Instruct baselines and the RL-trained Video-R1-7B across multiple video reasoning benchmarks. Green brackets show the gain of the \ours{} model over the baseline, with negative gains in red. Both (at least one) versions of \ours{} improve the baseline performance for 15/18 (18/18) model/dataset combinations.  Furthermore, the gain is of at least \textbf{0.9 points} for 19/36 combinations and can be as high as \textbf{3.9 points}. In many cases, these gains are a substantial part of the gap between the baseline and the RL-trained model. For example, for MMVU and 7B-256px models the 63.4 point accuracy of \ours{} (Lite) brings the relatively lower 59.5 point baseline close to the 64.2 point accuracy of the Video-R1. For the 128 px model, \ours{} even surpasses Video-R1 (\textbf{65.0 vs. 64.0}). This model also matches Video-R1 on TempCompass (73.3 each), and nearly closes the gap on VideoMMMU (47.4 vs. 48.8). 
These very significant gains show that the baseline model already has a significant ability to reason, which RL brings to the surface, but can  also be mostly unlocked by much less expensive inference-time optimization of \ours{}.  

We empirically observe that our training-free, search-based method is most effective when the base model already possesses the relevant underlying knowledge. For certain tasks like VSI-MRA which are underrepresented in the model's learned knowledge space, \ours{} still obtains a \textbf{+1.4\%} improvement over the baseline. 

Overall, across model scales and input resolutions, \ours{} and \ourslite{} have average gains in \textbf{[+0.9,1.3]\%}  over Qwen2.5-VL, at the 3B scale. At the 7B scale, \ours{}/\ourslite{} reaches 54.8\%/54.6\% (256 px) and 55.9\%/56.0\% (128 px) average accuracy without MRA, narrowing the gap to Video-R1-7B (56.6\% at 128 px) to within \textbf{0.6\%}.

\begin{figure*}[t]
\centering
\includegraphics[keepaspectratio, width=0.9\textwidth]{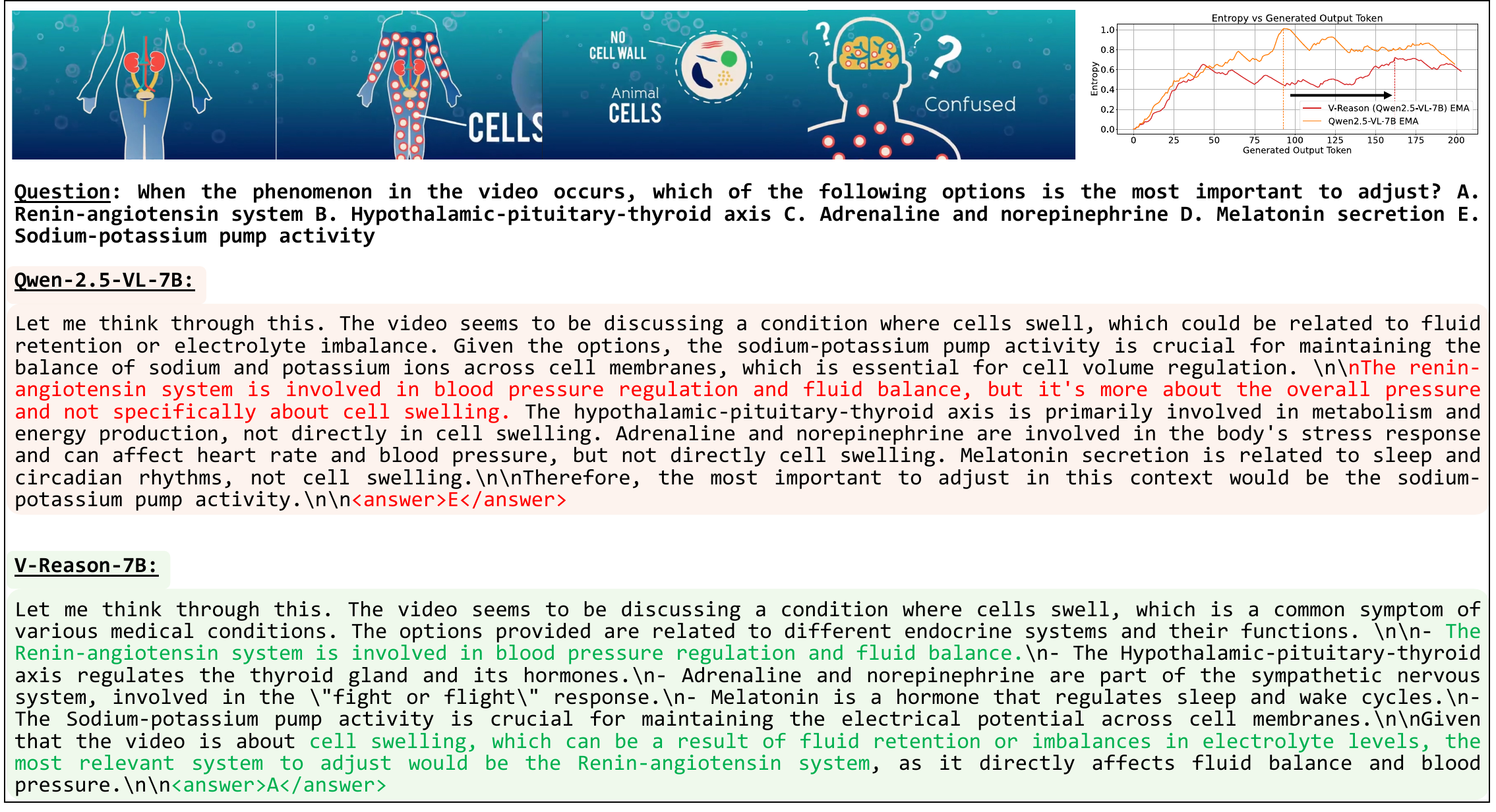}
\caption{Qualitative result: An example output and comparison with the baseline Qwen-2.5-VL-7B together with its entropy plot shown on the top right. The black arrow in the entropy plot denotes the shift in the EMA peak demonstrating longer exploration for \ours{} compared to the baseline.}
\label{fig:qual_adibatic}
\end{figure*}

\noindent\textbf{Impact of Frames and Resolution.}
\ours{} is robust to different frame counts and resolutions, making it adaptable to resource-constrained settings. For a comparable compute budget (256 px/16 frames vs. 128 px/32 frames), accuracy improves with more frames. Consistent with this trend, \ours{} shows larger average gains at 32 frames (\textbf{+1.4\%}) compared to 16 frames (\textbf{+1.0\%}).

\noindent\textbf{Full vs. Lite.}
Both Full and Lite \ours{}  variants surpass the base model, with Lite performing comparably or slightly better in several cases (e.g., \textbf{+1.0\%} at 3B and \textbf{+1.4\%} at 7B for 128 px). The Full variant offers an additional \textbf{+0.2\%} improvement at 256 px with 16 frames, suggesting that structural pruning is complementary to reasoning. 

\noindent\textbf{Output Sequence Length.} As shown in Fig.~\ref{fig:entropyIntro}(d), \ours{} substantially reduces output token length (\textbf{58.6\% reduction} over Video-R1). Table 5 (Supplementary C) shows that this translates into significant latency savings of up to \textbf{67\%} (\textbf{37\%} on average). The fact that this holds for both Full and Lite variants indicates that the gains stem from the proposed optimization rather than pruning.

To test the scalability of \ours{}, we further evaluate it on Qwen3-VL and larger Qwen2.5-VL backbones (32B and 72B) using  MMVU.
Table~\ref{tab:mmvu_comparison} shows that gains hold across model families: e.g., \textbf{+5.1\%} over MMVU for Qwen3-VL, showing that the method is not architecture-specific. \ours{} continues to provide significant gains on larger model sizes, e.g., \textbf{+3.0\% (72.0 vs. 69.0)} for the 32B model, demonstrating that reasoning benefits do not diminish with scale. For the 72B model, which is already strong, \ours{} still yields a \textbf{+0.4\% gain (73.0 vs. 72.6)}.
These results show that \ours{} generalizes to different backbones and larger models and provides benefits even at the frontier of large-scale video reasoning models.

\begin{table}[t]
\centering
\scriptsize
\setlength{\tabcolsep}{2pt}

\begin{minipage}[t]{0.47\columnwidth}
\centering
\caption{Large model results on MMVU.}
\label{tab:mmvu_comparison}
\begin{tabular}{l|c}
\toprule
\textbf{Model} & \textbf{MMVU} \\
\midrule
Qwen-2.5-VL-32B & 69.0 \\
\ours{}-32B & \textbf{72.0 \textcolor{ForestGreen}{[+3.0]}} \\
Qwen-2.5-VL-72B & 72.6 \\
\ours{}-72B & \textbf{73.0 \textcolor{ForestGreen}{[+0.4]}} \\
\midrule
Qwen-3-VL-8B & 64.8 \\
\ours{}-8B & \textbf{69.9 \textcolor{ForestGreen}{[+5.1]}} \\
\bottomrule
\end{tabular}
\end{minipage}
\hfill
\begin{minipage}[t]{0.47\columnwidth}
\centering
\caption{Optimization objective \\ablations.}
\label{tab:ablation}
\begin{tabular}{l|c}
\toprule
\textbf{Method} & \textbf{MMVU} \\
\midrule
Qwen-2.5-VL-7B & 61.3 \\
Min-Entropy \texttt{(Lite)} & 62.1 \textcolor{ForestGreen}{[+0.8]} \\
Max-Entropy \texttt{(Lite)} & 63.8 \textcolor{ForestGreen}{[+2.5]} \\
KV Cache \texttt{(Lite)} &  61.8\textcolor{ForestGreen}{[+0.5]} \\
Last two \texttt{(Lite)} &  62.2\textcolor{ForestGreen}{[+0.9]} \\
\midrule
\ourslite{} & \textbf{65.0 \textcolor{ForestGreen}{[+3.7]}} \\
\bottomrule
\end{tabular}
\end{minipage}

\end{table}

\noindent\textbf{Comparison with Decoding Methods.}
\begin{figure} 
    \centering
    \scriptsize
    \setlength{\tabcolsep}{2pt}
    \captionof{table}{Comparison with alternative decoding methods.}
    \label{tab:comparison_decoding}
    \begin{tabular}{l|c|c|c}
    \toprule
    \textbf{Qwen-2.5-VL-7B} & \texttt{temp} & \texttt{top-p}& \textbf{MMVU} \\
    \midrule
    min-p & 0.3 & 0.9& 61.8 \\
    min-p+\ourslite{} &0.3 & 0.9& \textbf{63.8 \textcolor{ForestGreen}{[+2.0]}} \\
    top-H &0.3 & 0.9& 60.2 \\
    top-H+\ourslite{} &0.3 & 0.9& \textbf{61.1 \textcolor{ForestGreen}{[+0.9]}} \\
    \midrule
    min-p & 1.0 & 0.9& 55.0 \\
    min-p+\ourslite{} &1.0 & 0.9& \textbf{61.3 \textcolor{ForestGreen}{[+6.3]}} \\
    top-H &1.0 & 0.9& 62.2 \\
    top-H+\ourslite{} &1.0 & 0.9& \textbf{62.6 \textcolor{ForestGreen}{[+0.4]}} \\
    \midrule
    \ours{}-7B \texttt{(Lite)} & 0.1 & 0.001&\textbf{65.0 \textcolor{ForestGreen}{[+2.8]}} \\
    \bottomrule
    \end{tabular}
\end{figure}
As shown in Table~\ref{tab:comparison_decoding}, our method is robust and complementary to different decoding strategies with significant improvements over SOTA approaches such as \textit{min-p}~\citep{Nguyen2024MinPSampling} and \textit{top-H}~\citep{Potraghloo2025TopHDecoding}. 
For the Qwen-2.5-VL-7B model, using the best \textit{min-p} decoding with \ourslite{} yields a gain of {\textbf{+2.0}} points on MMVU, while combining with best \textit{top-H} decoding provides a smaller improvement of \textbf{{+0.4}}. On higher temperatures, \textit{min-p} loses significant accuracy but \ourslite{} is able to restore it back (\textbf{+6.3\%}). Most notably, \ours{}-7B \texttt{(Lite)} achieves the highest score of \textbf{65.0}, corresponding to a further \textbf{+2.8} gain over the best decoding baseline.

\noindent\textbf{Qualitative Results.}
Figure~\ref{fig:qual_adibatic} exemplifies the reasoning differences between \ours{} and the baseline, also showing their entropy profiles. The entropy plots reveal that \ours{} has a delayed EMA peak and a lower overall entropy, encouraging extended exploration that ultimately enables the model to reach the correct solution. As highlighted in red, the baseline initially follows a promising trajectory but subsequently diverges onto an incorrect reasoning path, which leads to the wrong answer. In contrast, \ours{} identifies an alternative path precisely at the point where the baseline falters, and this revised trajectory, shown in green, successfully leads to the correct answer. Please see Supplementary I for other examples.

\noindent\textbf{Alternative Losses.} 

The switching loss in~\eqref{eq:entropy_loss} supports various behaviors beyond that encouraged by \ours{}. 
Two extreme alternatives are enforcing strictly increasing entropy (max-entropy, $\alpha_k =1, \forall k$) and strictly decreasing entropy (min-entropy, $\alpha_k =-1, \forall k$). Table~\ref{tab:ablation} shows the two alternative losses, additional ablations on updating a KV cache controller and the last two layers of the model. It shows that these approaches are clearly inferior to \ours{}. However, it is interesting to note that even the basic losses (encourage macro-exploration or macro-exploitation only) improve on the performance of the baseline model. This confirms that structured entropy control helps with the reasoning ability of LMMs.

\vspace{-3mm}
\section{Conclusion}\label{sec:conc}
\vspace{-1mm}
In this paper, we introduced \ours{}, a training-free framework that elicits video reasoning through a value-cache controller at inference. Our method leverages a theoretically-grounded entropy-based objective to reinforce the micro-exploration and micro-exploitation behaviors observed across models. This design effectively mitigates unbounded entropy growth during early generation steps, resulting in delayed entropy peak and lower final entropy, a characteristic of stronger models. We further proposed \ourslite{}, a ``Lite'' variant which improves the memory by pruning low $l_2$-norm entries in the value cache.
Extensive experiments across multiple benchmarks demonstrate that \ours{} narrows the gap to RL–trained models (e.g., Video-R1) to within \textbf{0.6\%}, while reducing output token length \textbf{($\downarrow$58.6\%)}; this also results in lower \textbf{($\downarrow$37\%)} inference time than Video-R1. Moreover, \ours{} consistently improves performance across model scales ranging from 3B to 72B parameters and remains robust to variations in frame sampling, pixel resolution, decoding techniques, and other hyperparameter configurations.

{
    \small
    \bibliography{main}
}
\clearpage
\appendix

\section{Theoretical Analysis: Bounding Entropy under Switching Loss}
\label{sec:theory_entropy_boundedness}
\setcounter{proposition}{0}
Let the vocabulary size be $n=|\mathcal V|$. At generation step $t$, the model (with value-cache controller parameters $\Delta V$) produces logits $z_t\in\mathbb R^n$ and probabilities
\[
p_t(\Delta V) = \mathrm{softmax}(z_t(\Delta V)),\qquad \sum_i p_t^i=1.
\]
The Shannon entropy of this distribution is
\[
H_t(\Delta V) := -\sum_{i=1}^n p_t^i(\Delta V)\log p_t^i(\Delta V),
\]
and its exponential moving average (EMA) is
\[
H_t^{ema} = \beta H_{t-1}^{ema} + (1-\beta)H_t,\quad \beta\in(0,1).
\]

The Entropy Switching Loss at optimization step $k$ is
\[
\mathcal L_{switch}(\Delta V) = -\alpha_k H_k(\Delta V),
\]
where the coefficient $\alpha_k\in\{-1,+1\}$ is defined as
\[
\alpha_k =
\begin{cases}
+1 & \text{if } H_k^{ema} \ge H_{peak}^{ema},\\
-1 & \text{otherwise},
\end{cases}
\]
with $H_{peak}^{ema}$ denoting the maximum EMA value observed before step $k$.

\paragraph{Assumptions.} We make the following assumptions:
\begin{enumerate}
    \item Logits $z_t(\Delta V)$ are smooth in $\Delta V$, and $\partial z_t/\partial \Delta V$ is bounded. From equation 1, $\mathbb{V}^{new}_L = \frac{\mathbb{V}_L + \Delta V}{||\mathbb{V}_L + \Delta V||} \cdot ||\mathbb{V}_L||.$ So, $\partial z_t/\partial \Delta V$ being bounded is a valid assumption because the update to value cache is bounded by the normalization factor which only provides a directional update. 
    \item The optimizer uses a bounded step size (learning rate) $\eta>0$ and updates are sufficiently small per step (i.e., standard stochastic gradient/Lipschitz assumptions).
    \item Vocabulary size is finite, hence $H_t\in[0,\log n]$ for all $t$.
\end{enumerate}

\paragraph{Preliminaries.}
Differentiating the entropy with respect to logits yields
\[
\nabla_z H = -J_p^\top (\mathbf{1}+\log p),
\]
where $J_p = \partial p/\partial z$ is the softmax Jacobian. Since $\|J_p\|$ is bounded and $\mathbf{1}+\log p$ is finite (as $p_i\in(0,1]$), we obtain
\[
\|\nabla_{\Delta V} H\| \le C
\]
for some constant $C$.

\begin{proposition}[Bounded entropy updates]
\label{prop:boundedness}
Under the assumptions above, one gradient step of size $\eta$ on $\mathcal L_{switch}$ changes entropy by at most
\[
|H_{t+1}-H_t| \le \eta C + o(\eta),
\]
and the process $\{H_t\}$ remains in the compact interval $[0,\log n]$. Here, $o(\eta)$ denotes the higher-order terms from the Taylor expansion of $H(\Delta V)$ around the current iterate.
\end{proposition}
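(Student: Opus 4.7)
The plan is to separate the two conclusions of the proposition: the compactness invariant $H_t \in [0,\log n]$ and the per-step bound $|H_{t+1}-H_t| \le \eta C + o(\eta)$. The compactness part is essentially immediate from the definition of Shannon entropy over a finite alphabet of size $n$: since $p_t$ is the output of a softmax over $n$ logits, it is strictly positive with unit sum, so $H_t$ is bounded below by $0$ and above by $\log n$ (attained at the uniform distribution). This bound is a structural property of the softmax and does not depend on the optimization dynamics, so it holds for every $t$.

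For the bounded-update claim, the first task is to make precise the constant $C$ appearing in the statement. The update rule reads $\Delta V_{t+1} = \Delta V_t - \eta \nabla_{\Delta V}\mathcal{L}_{switch} = \Delta V_t + \eta\alpha_k \nabla_{\Delta V} H_t$, and by the chain rule $\nabla_{\Delta V} H = (\partial z/\partial \Delta V)^\top \nabla_z H$ with $\nabla_z H = -J_p^\top(\mathbf{1}+\log p)$. I would then verify that $\|\nabla_{\Delta V} H\| \le C$ for some absolute constant: Assumption 1 controls $\|\partial z/\partial \Delta V\|$, and the softmax Jacobian $J_p = \mathrm{diag}(p) - pp^\top$ introduces a factor of $p_i$ in each summand of $J_p^\top(\mathbf{1}+\log p)$, which combines with $\log p_i$ to yield terms of the form $p_i(1+\log p_i)$. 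Since $x\mapsto x\log x$ extends continuously to $[0,1]$ with bounded range, the apparent singularity is tamed, and the finite vocabulary finishes the bound.

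With $\|\nabla_{\Delta V}H_t\|\le C$ in hand, I would conclude by a standard first-order Taylor expansion of $H$ as a function of $\Delta V$ around $\Delta V_t$:
\[
H_{t+1} = H_t + \eta\alpha_k \langle \nabla_{\Delta V}H_t,\ \nabla_{\Delta V}H_t\rangle + R,
\]
where the remainder $R$ is controlled by $\tfrac{1}{2}\eta^2 \|\nabla_{\Delta V}H_t\|^2 \sup\|\nabla^2_{\Delta V}H\|$. Under Assumption 1 (smooth logits with bounded first and, implicitly, second derivatives) and the fact that the softmax and its derivatives are uniformly continuous on the simplex, the Hessian is uniformly bounded, so $R = o(\eta)$. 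Using $|\alpha_k|=1$ and Cauchy--Schwarz, the absolute value of the first-order term is at most $\eta C$, giving $|H_{t+1}-H_t|\le \eta C + o(\eta)$ as claimed. Combining with the compactness argument shows the trajectory never leaves $[0,\log n]$.

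The main obstacle I expect is the uniform bound on $\nabla_z H$, since $\log p_i$ is genuinely unbounded as $p_i\to 0$; a naive triangle inequality would fail. The argument hinges on the precise cancellation built into the softmax Jacobian, namely that every entry of $J_p^\top(\mathbf{1}+\log p)$ carries a compensating factor of $p_i$ that absorbs the logarithmic singularity via $\lim_{x\to 0^+} x\log x = 0$. Once this is stated carefully, the remainder of the argument is a routine smoothness-plus-Taylor calculation and the compactness invariant follows essentially for free from the definition of entropy.
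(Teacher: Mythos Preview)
Your argument is correct and structurally parallel to the paper's, but with one genuine improvement. To bound $\|\nabla_z H\|$, the paper's proof adds an extra hypothesis that the softmax probabilities stay bounded away from $0$ by some $\epsilon>0$, and then bounds $\|J_p\|$ and $\|\mathbf{1}+\log p\|$ separately; you instead exploit the explicit form $J_p=\mathrm{diag}(p)-pp^\top$ so that each component of $\nabla_z H$ works out to $-p_j(\log p_j+H)$, which is uniformly bounded on the open simplex via $\lim_{x\to 0^+}x\log x=0$. This removes the $\epsilon$ assumption entirely and is the cleaner route. For the per-step bound, the paper uses a one-line Lipschitz (mean-value) estimate rather than a second-order Taylor expansion, so it avoids invoking a bounded Hessian; your version trades that for an explicit $o(\eta)$ remainder, which actually matches the statement's form more closely. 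One cosmetic slip: your first-order term is $\eta\alpha_k\|\nabla_{\Delta V}H_t\|^2$, so its magnitude is at most $\eta C^2$ rather than $\eta C$---harmless, since the constant in the proposition is generic.
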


\begin{proof}
First, the gradient of entropy with respect to controller parameters is
\[
\nabla_{\Delta V} H_k(\Delta V) = \frac{\partial H_k}{\partial z_k} \frac{\partial z_k}{\partial \Delta V}.
\]

\textbf{Bounding $\nabla_z H_k$.}  
For softmax probabilities bounded away from $0$ and $1$, the Jacobian $J_p = \partial p_k / \partial z_k$ satisfies $\|J_p\|_2 \le 1$. Moreover, the entropy gradient w.r.t. logits is
\[
\nabla_z H_k = -J_p^\top (\mathbf{1} + \log p_k),
\]
and $\|\mathbf{1} + \log p_k\|_2 \le \sqrt{n} \max_i |1 + \log p_k^i| \le C_1$ for some constant $C_1$ depending on $n$ and $\epsilon$ (the lower bound on softmax probabilities). Therefore,
\[
\|\nabla_z H_k\|_2 \le C_1.
\]

\textbf{Bounding $\nabla_{\Delta V} H_k$.}  
Since $z_k$ is $L_z$-Lipschitz in $\Delta V$,
\[
\|\nabla_{\Delta V} H_k\|_2 = \|\nabla_z H_k \cdot \partial z_k / \partial \Delta V\|_2 \le C_1 L_z := L_H.
\]

\textbf{Bounding one gradient step.}  
A single gradient step updates the controller:
\[
\Delta V \gets \Delta V + \eta \alpha_k \nabla_{\Delta V} H_k.
\]
Using the Lipschitz property of $H_k$ w.r.t $\Delta V$,
\[
|H_k(\Delta V + \eta \alpha_k \nabla_{\Delta V} H_k) - H_k(\Delta V)| \le \eta \|\nabla_{\Delta V} H_k\|_2 \le \eta L_H.
\]

\textbf{Global bounds.}  
Since $H_k \in [0, \log n]$ by definition, this step-size bound guarantees the entropy remains in $[0,\log n]$ after each update.

\end{proof}

\begin{proposition}[EMA smoothing bounds oscillations]
\label{prop:ema}
For $\beta\in(0,1)$ close to $1$, the EMA acts as a low-pass filter: \emph{(\textit{i})}~it attenuates high-frequency fluctuations of $H_t$, \emph{(\textit{ii})}~delays the attainment of entropy maxima, and \emph{(\textit{iii})}~enforces bounded oscillations by switching $\alpha_k$ to $-1$ once a new global EMA maximum is reached.
\end{proposition}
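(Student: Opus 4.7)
\textbf{Proof proposal for Proposition \ref{prop:ema}.} The plan is to analyze the EMA recursion as a discrete-time linear filter acting on the entropy sequence $\{H_t\}$, then prove each of the three claims (\textit{i})--(\textit{iii}) separately using standard signal processing arguments combined with the bounded-update guarantee from Proposition \ref{prop:boundedness}. First, I would unroll the EMA recursion to obtain the explicit convolution form
\[
H_t^{ema} = (1-\beta)\sum_{s=0}^{t} \beta^{\,t-s} H_s,
\]
which exhibits $H_t^{ema}$ as a weighted average of past entropies with exponentially decaying kernel $w_{t-s} = (1-\beta)\beta^{\,t-s}$. This representation is the workhorse for all three claims, since it identifies the EMA with a causal first-order IIR low-pass filter of transfer function $G(z) = (1-\beta)/(1 - \beta z^{-1})$.

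For claim (\textit{i}), I would compute the discrete-time Fourier transform of the kernel to obtain the magnitude response
\[
|G(e^{j\omega})| = \frac{1-\beta}{\sqrt{1-2\beta\cos\omega + \beta^2}},
\]
and show by direct differentiation in $\omega$ that this is monotonically decreasing on $[0,\pi]$, with $|G(e^{j0})|=1$ and $|G(e^{j\pi})| = (1-\beta)/(1+\beta)\to 0$ as $\beta\to 1$. This establishes that high-frequency fluctuations of $H_t$ are attenuated relative to low-frequency trends, which is the precise meaning of low-pass smoothing.

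For claim (\textit{ii}), I would exploit the convolution form to show an explicit lag inequality: if $H_t$ is non-decreasing on an interval $[0,t^*]$, then $H_t^{ema} \le H_t$ on the same interval, since each $H_t^{ema}$ is a convex combination of $\{H_s\}_{s\le t}$ with $H_s \le H_t$. Consequently, the EMA cannot attain its maximum before $H_t$ attains its own maximum; the precise delay can be quantified by the group delay of the filter, which for the first-order IIR is $\beta/(1-\beta)$ steps at low frequencies, tending to $\infty$ as $\beta\to 1$. I would then conclude that the EMA peak $H_{peak}^{ema}$ occurs strictly after the raw entropy peak whenever $\beta>0$, giving the ``delayed maxima'' claim.

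For claim (\textit{iii}), I would combine the switching rule for $\alpha_k$ with the step-size bound from Proposition \ref{prop:boundedness}. Once the EMA attains a new global maximum $H_{peak}^{ema}$ and then falls below it, the rule forces $\alpha_k=-1$, so the controller update pushes $H_k$ downward by at most $\eta C + o(\eta)$ per step; symmetrically, while $H_k^{ema}\ge H_{peak}^{ema}$ the update is upward by the same bounded amount. Induction on $k$, together with the compactness $H_k\in[0,\log n]$ inherited from Proposition \ref{prop:boundedness}, yields a uniform oscillation envelope $|H_{k+1}^{ema}-H_k^{ema}|\le (1-\beta)(\eta C + o(\eta))$, proving that the switching mechanism prevents unbounded oscillations. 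The main obstacle I anticipate is making the ``delayed maxima'' statement in (\textit{ii}) fully rigorous without assuming unimodality of $H_t$: the convolution argument gives a clean lag inequality under monotonicity, but in the general case one must argue via the group-delay approximation or an averaging argument over local windows, and the precision of the delay estimate degrades when $H_t$ is highly non-monotonic.
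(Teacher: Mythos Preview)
Your proposal follows the same three-part decomposition as the paper's proof---low-pass filtering for (\textit{i}), averaging-induced lag for (\textit{ii}), and the switching rule for (\textit{iii})---but supplies far more technical detail; the paper's version is literally a three-sentence sketch that asserts each property without the explicit DTFT magnitude response, the convex-combination lag inequality, the group-delay estimate, or any inductive envelope argument. So the approach is the same, and your elaboration would make the result considerably more rigorous than what the paper provides.

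One small technical slip in your part (\textit{iii}): the envelope $|H_{k+1}^{ema}-H_k^{ema}|\le (1-\beta)(\eta C + o(\eta))$ does not follow from Proposition~\ref{prop:boundedness} as you claim. The EMA recursion gives $H_{k+1}^{ema}-H_k^{ema}=(1-\beta)(H_{k+1}-H_k^{ema})$, not $(1-\beta)(H_{k+1}-H_k)$, and $|H_{k+1}-H_k^{ema}|$ is bounded only by $\log n$ via compactness, not by the single-step increment $\eta C$. This does not undermine the boundedness conclusion---which already follows from $H_t\in[0,\log n]$---but the specific constant you write is wrong. The paper makes no quantitative envelope claim at all here, so once you fix the constant (or simply drop it in favor of the compactness argument you already invoke), your version remains strictly stronger than the paper's.
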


\begin{proof}[Proof]
(\textit{i})~The recursion $H_t^{ema}=\beta H_{t-1}^{ema}+(1-\beta)H_t$ is a causal low-pass filter, suppressing fast oscillations. (\textit{ii})~Because $H^{ema}$ averages over past values, peaks in $H_t$ appear later and at lower amplitude in $H^{ema}$, creating delayed switching. (\textit{iii})~Once $H^{ema}$ reaches a global maximum, $\alpha=-1$, turning the loss into an entropy-minimization objective. This guarantees the entropy trajectory descends after each peak, bounding the amplitude of oscillations.
\end{proof}

\paragraph{Discussion.}
The trivial upper bound $H_t\le\log n$ already prevents unbounded entropy; Proposition~\ref{prop:boundedness} strengthens this by showing the optimization dynamics cannot instantaneously jump arbitrarily close to $\log n$ provided the learning rate is small and gradients are bounded. In practice, this prevents pathological ‘‘entropy blow-ups’’ during optimization. EMA smoothing makes the switching decision depend on sustained increases in entropy rather than on single noisy spikes. These results imply that the Entropy Switching Loss enforces \emph{bounded micro-cycles} of exploration and exploitation: entropy increases are promoted only when sustained (captured by $H^{ema}$), while decreases are enforced once a peak is reached. This yields lower and delayed entropy maxima, consistent with the empirical patterns of stronger reasoning models.

Concurrent work, Top-H~\citep{Potraghloo2025TopHDecoding}, formalizes entropy bounds in the decoding step by solving (approximately) an entropy-constrained minimization problem that upper-bounds the randomness of the truncated distribution while keeping divergence from the model distribution small. 
Our approach uses a complementary perspective: rather than imposing a hard constraint on the sampling distribution at each decoding step, we \emph{optimize the controller} so that the model’s intrinsic token distributions themselves enter phases of controlled exploration and exploitation (via maximizing/minimizing $H$ at different times). The EMA-based switching mirrors the time-adaptive, entropy-aware thresholds used in Top-H while operating \emph{inside} the model (controller optimization) rather than as an external truncation rule. Empirically and theoretically, both approaches rely on the same fundamental fact: \emph{entropy is a natural, bounded quantity} that can be used as a control signal to trade-off diversity and consistency in generation.

\section{Implementation Details}\label{app:details}
\paragraph{Hyperparameters.} 
AdamW optimizer is used to update the controller with no weight decay. A step size of $k=4$ is used as default unless otherwise specified and the best accuracy is reported over a grid search of 10 learning rates from \texttt{5e-5} to \texttt{5e-4}. The gradient norm of the value-cache controller is clipped to 1.0. We used $\beta = 0.98$ smoothing factor for EMA.

\paragraph{Evaluation.} Classification accuracy is computed as the proportion of correct answers to the multiple-choice QA. Mean Relative Accuracy measures the proportion of predictions whose relative error falls below a series of thresholds ranging from 0.5 to 0.95. The final score is the average accuracy across all thresholds. For VSI-Bench, we report both classification accuracy and MRA individually, as well as their average. To compute the overall average accuracy across all six datasets, we divide by seven, treating the two scores from VSI-Bench separately in addition to the other datasets. When calculating the average accuracy without considering MRA, we divide by six, using only the accuracy score from VSI-Bench along with the scores from the remaining datasets.
\vspace{-2mm}

\begin{algorithm}[h]
\caption{Autoregressive LMM inference with \ours{}}
\label{alg:lmm_v_reason}
\begin{algorithmic}[1]
\Require Pretrained LLM $f_\theta$; Encoder $\mathcal{E}$; video frames $\mathcal{V}$; text prompt $\mathcal{X}$; Sampler \Call{Sample};; maximum length $L_{\max}$; temperature $\tau$; vocabulary $\mathcal{W}$.
\Ensure Generated text $\hat{\mathbf{y}}$.

\Function{UpdateV}{$\mathsf{V}$}
  \State $\mathsf{V_L}' \gets \frac{\mathbb{V}_L + \Delta V}{||\mathbb{V}_L + \Delta V||}||\mathbb{V}_L||$ \Comment{add trainable offset and normalize}
  \State \Return $\mathsf{V_L}'$
\EndFunction

\Function{Optimize}{$\mathsf{\boldsymbol{\ell}_{N}, \Delta V},k$}
  \State $p_k \gets \Call{Softmax}{\boldsymbol{\ell}_{N}}$
  \State $H_k \gets -\sum_{i\in |\mathcal{W}|}p^i_k(\Delta V) \text{log}( p^i_k(\Delta V))$
  \State $\alpha_k = 
    \begin{cases}
        -1 \ \text{if} \ H_k^{ema} < H_{peak}^{ema},\\
        +1 \ \text{otherwise},\\
    \end{cases}$ \Comment{compute alpha}
  \State $\mathcal{L}_{switch}(p_k; \mathsf{\Delta V})\  \gets -\alpha_k H_k$ \Comment{compute loss} 
  \State $\mathsf{\Delta V} \gets \argmin\mathcal{L}_{switch}(p_k; \mathsf{\Delta V})$ \Comment{update parameters}
  \State \Return $\mathsf{\Delta V}$
\EndFunction

\State $\mathbf{z}_{1:N} \gets {\mathcal{E}}({\mathcal{V}, \mathcal{X}})$
\State $(\boldsymbol{\ell}_N, \mathsf{KV}) \gets f_\theta(\mathbf{z}_{1:N})$ \Comment{prefill: compute logits and full KV cache}
\State $\hat{y}_1 \gets \Call{Sample}{\boldsymbol{\ell}_N, \tau}$
\State $\hat{\mathbf{y}} \gets [\hat{y}_1]$
\State $t \gets 1$

\While{$t < L_{\max}$ and $\hat{y}_t \neq \text{[EOS]}$}
  \State $\mathsf{V} \gets \Call{UpdateV}{\mathsf{V}, \mathcal{I}_v, \pi}$
  \State $\mathsf{\Delta V} \gets \Call{Optimize}{\boldsymbol{\ell}_{N}, \mathsf{\Delta V},k}$
  \State $(\boldsymbol{\ell}_{N+t}, \mathsf{KV}) \gets f_\theta(\hat{y}_t \,|\, \mathsf{KV})$
  \State $\hat{y}_{t+1} \gets \Call{Sample}{\boldsymbol{\ell}_{N+t}, \tau}$
  \State $\hat{\mathbf{y}} \gets [\hat{\mathbf{y}}; \hat{y}_{t+1}]$
  \State $t \gets t + 1$
\EndWhile

\State \Return $\hat{\mathbf{y}}$
\end{algorithmic}
\end{algorithm}

\begin{algorithm}[h]
\caption{Autoregressive LMM inference with \ourslite{}}
\label{alg:lmm_v_reason_lite}
\begin{algorithmic}[1]
\Require Pretrained LLM $f_\theta$; Encoder $\mathcal{E}$; video frames $\mathcal{V}$; text prompt $\mathcal{X}$; Sampler \Call{Sample};; maximum length $L_{\max}$; temperature $\tau$; pruning policy $\pi$ (e.g., keep ratio $r$ by importance).
\Ensure Generated text $\hat{\mathbf{y}}$.

\Function{PruneKV}{$\mathsf{KV}$, $\mathcal{I}_v$, $\pi$}
  \State $\mathcal{S} \gets \text{Score}(\mathsf{KV}, \mathcal{I}_v)$ \Comment{low L2-norm}
  \State $\mathcal{K} \gets \text{Select}(\mathcal{I}_v, \mathcal{S}, \pi)$ \Comment{indices to keep among video positions}
  \State $\mathcal{M} \gets \{\text{all text positions}\} \cup \mathcal{K}$ \Comment{full keep-set}
  \State $\mathsf{KV}' \gets \text{IndexSelect}(\mathsf{KV}, \mathcal{M})$ \Comment{prune keys/values along sequence dimension}
  \State \Return $\mathsf{KV}'$
\EndFunction

\State $\mathbf{z}_{1:N} \gets {\mathcal{E}}({\mathcal{V}, \mathcal{X}})$
\State $(\boldsymbol{\ell}_N, \mathsf{KV}) \gets f_\theta(\mathbf{z}_{1:N})$ \Comment{prefill: compute logits and full KV cache}
\State $\mathcal{I}_v \gets \{1,\dots,N_v\}$ \Comment{positions of video tokens}
\State $\mathsf{KV} \gets \Call{PruneKV}{\mathsf{KV}, \mathcal{I}_v, \pi}$ \Comment{KV-cache pruning for efficiency}
\State $\hat{y}_1 \gets \Call{Sample}{\boldsymbol{\ell}_N, \tau}$
\State $\hat{\mathbf{y}} \gets [\hat{y}_1]$
\State $t \gets 1$

\State $\hat{\mathbf{y}} \gets \text{AutoRegressive}[\hat{\mathbf{y}}; \hat{y}_{1}]$ \Comment{inference optimization same as \textbf{Algorithm 1}}

\State \Return $\hat{\mathbf{y}}$
\end{algorithmic}
\end{algorithm}

\section{Inference Time and GPU Memory}\label{app:inftime}
Table \ref{tab:inference_time} presents the inference time, measured in seconds, of the baseline Qwen-2.5-VL-7B, \ours{}-7B, \ours{}-7B \texttt{(Lite)}, and Video-R1 across the six video reasoning benchmarks. All experiments were conducted on input videos with maximum video pixels set to $128\times28\times28$ and 32 frames temporal length. 
The reported results are the average over 50 samples.

From the results, it is evident that \ours{} and \ourslite{} consistently outperforms Video-R1 in terms of wall-clock inference time except for VideoMMMU. Specifically, \ours{} reduces inference time by approximately \textbf{20–67\%} compared to Video-R1 across the evaluated benchmarks. For instance, on TempCompass, the inference time decreases from 11.8 seconds per sample to 3.9 seconds per sample, while on MVBench, the reduction is from 10.7 seconds per sample to 4.1 seconds per sample. Fig. 1(d) shows that \ours{} has the maximum average output token count for VideoMMMU dataset and so using a step-size of 4 results in more number of optimization steps as compared to other datasets. This explains the anomaly observed in VideoMMMU results where the inference time is higher than Video-R1-7B.
Further, comparing \ours{} and \ourslite{} shows that token pruning introduces additional latency that increases the inference time marginally (+0.23 seconds) as compared to the full version without any pruning. These results highlight that \ours{}-7B  and \ours{}-7B \texttt{(Lite)} achieves a significant efficiency advantage in wall-clock inference time over the RL-trained model while narrowing the gap to within 0.6\% accuracy as demonstrated in Table 1.

We report the peak GPU memory usage for all models and compare \ourslite{} with \ours{} to show the benefit of our pruning variant in reducing GPU memory requirements. Table~\ref{tab:gpu_memory} shows that both \ours{} and \ourslite{} increase the memory overhead slightly compared to the baseline Qwen-2.5-VL-7B and the Video-R1-7B model as expected due to the additional memory overhead in optimization. Note that the memory overhead is much lower than optimizing for all decoder layers in the KV-cache. To further reduce the overhead, we introduced the lite variant \ourslite{}. The table shows that \ourslite{} reduces the average memory requirement across all datasets by \textbf{11.6\%} as compared to the full variant. In particular, the memory requirements drop by \textbf{20\%} on datasets with longer output token count length such as VideoMMMU (see Fig. 1(d)) suggesting the effectiveness of the proposed Lite variant. Notably, the peak GPU memory of \ourslite{} method is always lower than 32GB for the 7B model (on the datasets tested). This shows that the proposed lite variant is more suited for relatively smaller GPUs (e.g., 32GB V100 GPUs) and would not require more expensive GPUs like the Full variant.

\begin{table*}[h]
\centering
\scriptsize
\caption{Inference time (in seconds/sample) of Qwen-2.5-VL-7B, \ours{}-7B \texttt{(Lite)}, \ours{}-7B, and Video-R1-7B across different video reasoning benchmarks. Averaged over 50 samples from each dataset.}
\label{tab:inference_time}
\setlength{\tabcolsep}{1pt}
\begin{tabular}{l|c|c|c|c|c|c|c}
\toprule
\textbf{Model} & 
\textbf{VSI-Bench } &
\textbf{VideoMMMU } & 
\textbf{MMVU } & 
\textbf{TempCompass } & 
\textbf{MVBench } & 
\textbf{VideoMME } & \textbf{Average} \\
\midrule
Qwen-2.5-VL-7B & 3.80 & 9.02  & 6.73  &  2.86 & 3.30  & 4.37  & 5.01  \\
\midrule
Video-R1-7B & 10.17  &  \textbf{11.72} &  11.61 &  11.77  & 10.69  & 11.42  &  11.23 \\
\ours{}-7B \texttt{(Lite)} & 
\textbf{5.43 \textcolor{ForestGreen}{[$\downarrow$46.6\%]}} & 
14.18 \textcolor{Crimson}{[$\uparrow$21.0\%]} & 
\textbf{8.86 \textcolor{ForestGreen}{[$\downarrow$23.7\%]}} & 
\textbf{4.18 \textcolor{ForestGreen}{[$\downarrow$64.5\%]}} & 
\textbf{4.45 \textcolor{ForestGreen}{[$\downarrow$58.4\%]}} & 
\textbf{6.64 \textcolor{ForestGreen}{[$\downarrow$41.9\%]}} & 
\textbf{7.29 \textcolor{ForestGreen}{[$\downarrow$35.1\%]}} \\
\ours{}-7B & 
\textbf{5.06 \textcolor{ForestGreen}{[$\downarrow$50.2\%]}} & 
13.83 \textcolor{Crimson}{[$\uparrow$18.0\%]} & 
\textbf{9.28 \textcolor{ForestGreen}{[$\downarrow$20.0\%]}} & 
\textbf{3.87 \textcolor{ForestGreen}{[$\downarrow$67.1\%]}} & 
\textbf{4.13 \textcolor{ForestGreen}{[$\downarrow$61.4\%]}} & 
\textbf{6.18 \textcolor{ForestGreen}{[$\downarrow$45.9\%]}} & 
\textbf{7.06 \textcolor{ForestGreen}{[$\downarrow$37.1\%]}} \\
\bottomrule
\end{tabular}

\end{table*}

\begin{table*}[h]
\centering
\scriptsize
\caption{Peak GPU memory (in GB) of Qwen-2.5-VL-7B, \ours{}-7B \texttt{(Lite)}, \ours{}-7B, and Video-R1-7B across different video reasoning benchmarks. Averaged over 50 samples from each dataset.}
\label{tab:gpu_memory}
\setlength{\tabcolsep}{1pt}
\begin{tabular}{l|c|c|c|c|c|c|c}
\toprule
\textbf{Model} & 
\textbf{VSI-Bench } &
\textbf{VideoMMMU } & 
\textbf{MMVU } & 
\textbf{TempCompass } & 
\textbf{MVBench } & 
\textbf{VideoMME } & \textbf{Average} \\
\midrule
Qwen-2.5-VL-7B & 16.55 & 16.65  & 16.60  &  16.47 &  16.51  & 16.53  &  16.55 \\
Video-R1-7B & 16.70  & 16.74  &  16.73 & 16.68  &  16.70 & 16.69  & 16.71  \\
\midrule
\ours{}-7B & 23.95  &  38.48 &  29.91 & 22.32  & 23.28  & 25.56  &  27.25 \\

\ours{}-7B \texttt{(Lite)} & 
\textbf{22.41 \textcolor{ForestGreen}{[$\downarrow$6.4\%]}} & 
\textbf{30.79 \textcolor{ForestGreen}{[$\downarrow$20.0\%]}} & 
\textbf{25.05 \textcolor{ForestGreen}{[$\downarrow$16.2\%]}} & 
\textbf{21.45 \textcolor{ForestGreen}{[$\downarrow$3.9\%]}} & 
\textbf{21.86 \textcolor{ForestGreen}{[$\downarrow$6.1\%]}} & 
\textbf{22.89 \textcolor{ForestGreen}{[$\downarrow$10.5\%]}} & 
\textbf{24.08 \textcolor{ForestGreen}{[$\downarrow$11.6\%]}} \\

\bottomrule
\end{tabular}

\end{table*}

\begin{table*}[h]
\centering
\small
\caption{Comparison of Qwen-2.5-VL-7B, \ours{}, and \ours{}-7B \texttt{(Lite)} on VideoMME dataset. The differences with the baseline are denoted in red and green colors.}
\label{tab:videomme_dur}
\begin{tabular}{lc|c|c|c}
\toprule
Model & Mean Acc. & Short & Medium & Long \\
\midrule
Qwen-2.5-VL-7B & 53.7 & 64.6 & 50.4 & 46.1 \\
\midrule
\ours{}-7B \texttt{(Lite)} & 
53.5 \textcolor{Crimson}{[–0.2]} & 
\textbf{66.4 \textcolor{ForestGreen}{[+1.8]}} & 
49.7 \textcolor{Crimson}{[–0.8]} & 
44.3 \textcolor{Crimson}{[–1.8]} \\
\ours{}-7B & 
\textbf{54.9 \textcolor{ForestGreen}{[+1.2]}} & 
\textbf{66.4 \textcolor{ForestGreen}{[+1.8]}} & 
\textbf{51.2 \textcolor{ForestGreen}{[+0.8]}} & 
\textbf{47.0 \textcolor{ForestGreen}{[+0.9]}} \\
\bottomrule
\end{tabular}

\end{table*}

\begin{table*}[h]
\centering
\scriptsize
\caption{Ablation studies on pruning and learning rates for the variant using 128 px and 32 frames.}
\setlength{\tabcolsep}{2pt}
\resizebox{\textwidth}{!}{%
\begin{tabular}{l c c c c c c c c}
\toprule
\textbf{Model} & \textbf{VSI-Bench} & \textbf{VideoMMMU} & \textbf{MMVU} & \textbf{MVBench} & \textbf{TempCompass} & \textbf{VideoMME} & Avg & Avg\\
& (Acc/MRA) & & (mc) & & & (wo sub) &  & (wo mra) \\
\midrule
Qwen2.5-VL-7B &  28.1 (33.8/22.3) & 45.8 & 61.3 & 60.7 & 72.4 & 53.7 & 50.0& 54.6\\
\midrule
Qwen2.5-VL-7B + 50\% Pruning &  \textbf{28.2 (34.5/21.9)} \textbf{\textcolor{ForestGreen}{[+0.1]}}& \textbf{45.1} \textbf{\textcolor{Crimson}{[–0.7]}} & \textbf{61.3} \textbf{\textcolor{black}{[+0.0]}} & \textbf{60.0} \textbf{\textcolor{Crimson}{[–0.7]}} & \textbf{72.8} \textbf{\textcolor{ForestGreen}{[+0.4]}} & \textbf{52.8} \textbf{\textcolor{Crimson}{[–0.9]}} & \textbf{49.8} \textbf{\textcolor{Crimson}{[–0.2]}} & \textbf{54.4} \textbf{\textcolor{Crimson}{[–0.2]}}\\
\hline
\ours{}-7B \texttt{(Lite)}; (lr: 3e-4) &  \textbf{30.5 (37.3/23.7)} \textbf{\textcolor{ForestGreen}{[+2.4]}} & \textbf{46.7} \textbf{\textcolor{ForestGreen}{[+1.6]}} & \textbf{64.8} \textbf{\textcolor{ForestGreen}{[+3.7]}} & \textbf{60.6} \textbf{\textcolor{Crimson}{[–0.1]}} & \textbf{72.3} \textbf{\textcolor{Crimson}{[–0.1]}} & \textbf{53.5} \textbf{\textcolor{Crimson}{[–0.2]}} & \textbf{51.3} \textbf{\textcolor{ForestGreen}{[+1.3]}} & \textbf{55.9} \textbf{\textcolor{ForestGreen}{[+1.3]}}\\
\bottomrule
\label{tab:ablation_lr_pruning}
\end{tabular}
}
\end{table*}

\paragraph{Trainable memory computation example for the controller.}
Let us assume a fixed video token length of 1920 for analysis. Then the proposed controller introduces a parameter tensor of shape $(1,4,1920,128)$ for Qwen-2.5-VL-7B model, 
amounting to $N = 983{,}040$ trainable scalars. 
In FP32, this corresponds to $N \times 4 \,\text{bytes} = 3.84$~MiB of weights, 
while in FP16 the footprint is $1.92$~MiB. 
During training with the AdamW optimizer, additional memory is required for the gradient 
and two moment estimates of the same size as the parameters. 
Thus, in pure FP32 training the memory becomes 
$4 \times 3.75 = 15.36$~MiB (weights + gradients + $m$ + $v$). 
Since, the controller is used only as an additive bias (element-wise addition) the arithmetic cost is negligible ($\sim N$ adds, i.e.,\ $<10^{6}$ adds). The operation above is tiny compared to the bulk of transformer computation (attention and large dense projections), which typically entail orders of magnitude more FLOPs per token for typical hidden sizes and sequence lengths; therefore the controller’s compute overhead is minimal in most deployments. Note that the total GPU memory required for inference-time optimization will also include the memory required for storing the activations and gradients of the last decoder layer in the model as discussed above.

\section{Analysis on Video duration}\label{app:analysis}
We investigate the effect of video duration on the performance of \ours{} using the VideoMME dataset, which provides annotations for short, medium, and long videos. Specifically, short videos are less than two minutes in duration, medium videos range from 4 to 15 minutes, and long videos span 30 to 60 minutes. Table~\ref{tab:videomme_dur} presents a detailed breakdown of the results for both \ours{} and its Lite variant across these duration categories. The full \ours{} model consistently achieves notable gains, with a substantial improvement on short videos (\textbf{+1.8\%}) and notable gains on medium (\textbf{+0.8\%}) and long (\textbf{+0.9\%}) videos. The Lite variant of \ours{} also yields a significant improvement on short videos (\textbf{+1.8\%}), comparable to the full model, but its performance decreases for medium and long videos. We attribute this decline to pruning, which likely removes important temporal or contextual details, thereby reducing accuracy for longer content.

\begin{figure*}[t]
\centering
\begin{minipage}{0.48\textwidth}
\centering
\includegraphics[width=\textwidth]{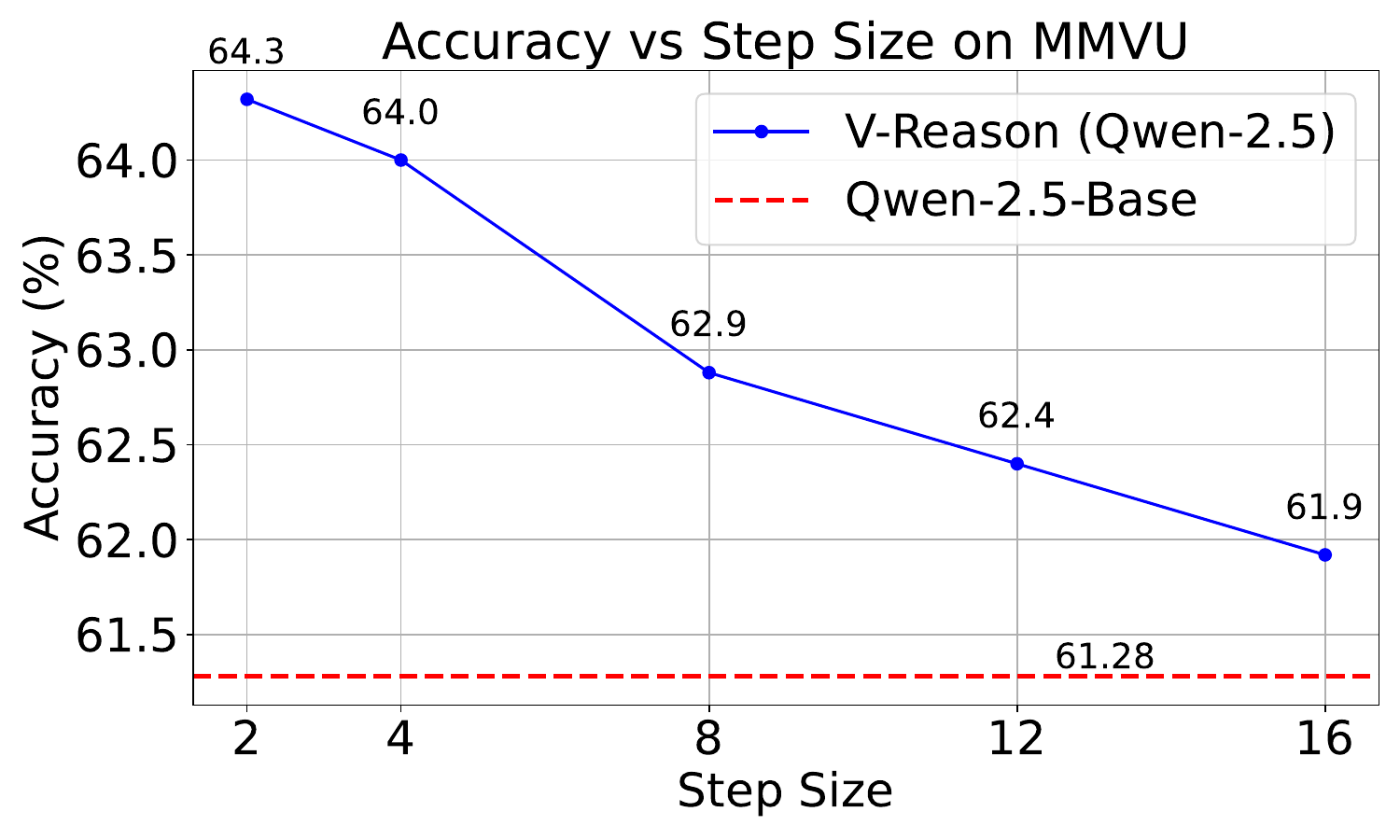}
\captionsetup{justification=centering}
\captionof{figure}{Optimization step-size ablations.}
\label{fig:mmvu_step_ablation}

\end{minipage}
\begin{minipage}{0.48\textwidth}
\centering
\includegraphics[width=\textwidth]{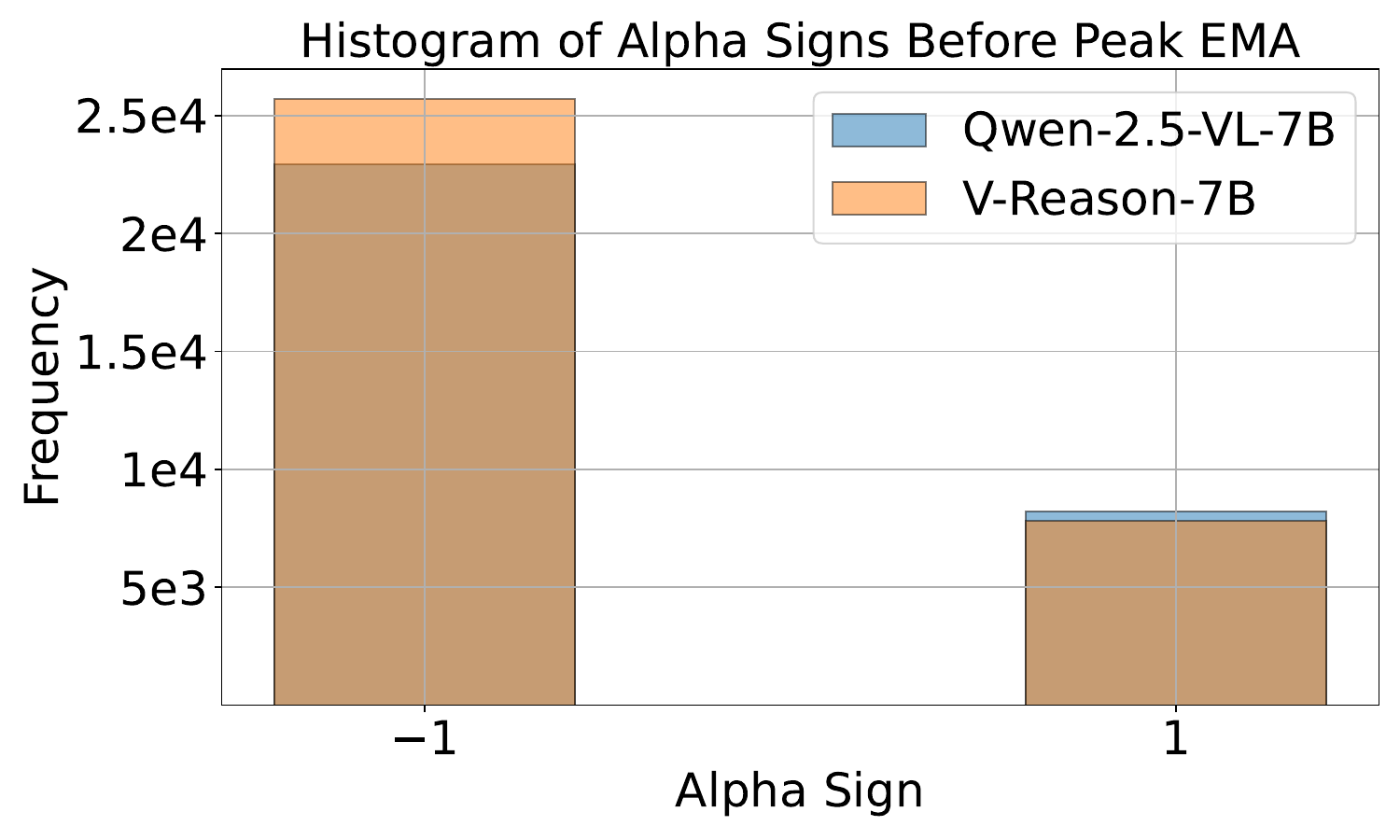}
\captionsetup{justification=centering}
\captionof{figure}{Alpha histogram before peak EMA entropy.}
\label{fig:alpha_histogram}
\end{minipage}
\label{fig:combined_results_row}
\end{figure*}

\begin{figure*}[h]
    \centering
    \begin{subfigure}[b]{0.48\linewidth}
        \centering
        \includegraphics[width=\linewidth]{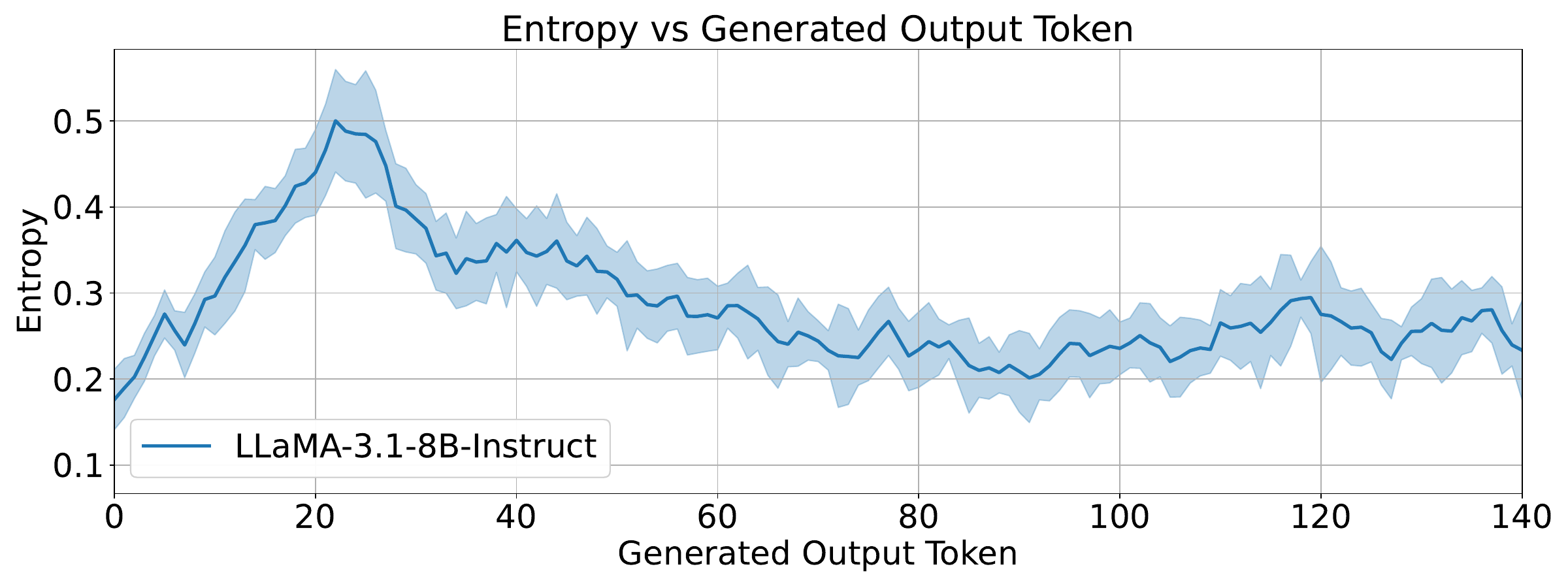}
        \caption{Llama 3.1}
        \label{fig:llama3.1}
    \end{subfigure}
    \hfill
    \begin{subfigure}[b]{0.48\linewidth}
        \centering
        \includegraphics[width=\linewidth]{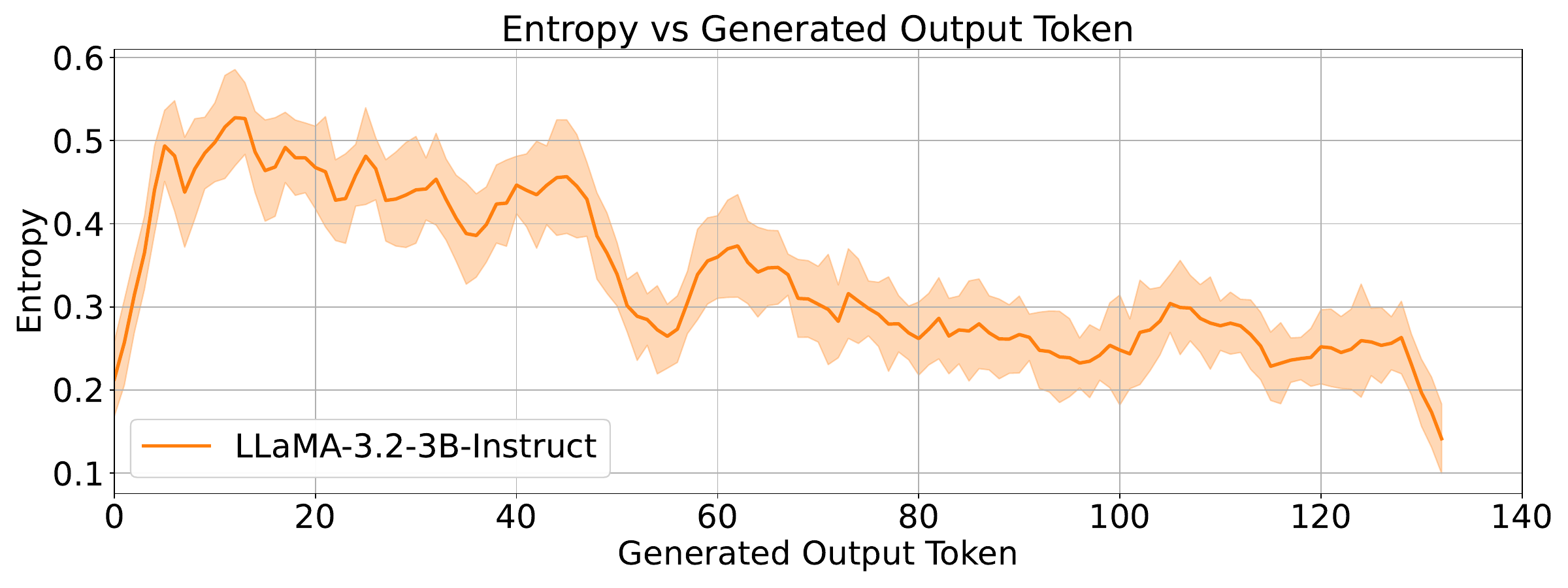}
        \caption{Llama 3.2}
        \label{fig:llama3.2}
    \end{subfigure}
    \caption{\textit{Llama analysis}: Entropy of the output distribution averaged over 100 samples of the MATH dataset~\citep{qwenmath}. Similar to Qwen LMMs, we see clear macro-exploration and macro-exploitation phases (having micro-exploration and micro-exploitations). Note that the entropies cannot be directly compared due to different training recipes and tokenizers across the different models but we do see that the larger model (more accurate) has a delayed entropy peak than the smaller model consistent with the trends seen in Qwen model series.}
    \label{fig:llamaentropy}
\end{figure*}

\begin{table*}[ht]
\centering
\scriptsize
\caption{Image Understanding benchmarks (Qwen2.5-VL,\ours{})}
\resizebox{0.8\textwidth}{!}{%
\begin{tabular}{lccccccccccc}
\toprule
\textbf{Method} & \textbf{GQA} & \textbf{TextVQA} & \textbf{SQA} & \textbf{SEED} & \textbf{MMVet} & \textbf{Avg.} \\
\midrule
Qwen2.5-VL-7B & 48.9 & 66.4 & \textbf{42.5} & \textbf{74.1} & 45.8 & 55.5 \\
\ours{}-7B \texttt{(Lite)} & \textbf{49.2} \textcolor{ForestGreen}{[+0.2]} & \textbf{67.7} \textcolor{ForestGreen}{[+1.3]} & \textbf{42.8} & \textbf{74.1} & \textbf{46.9} \textcolor{ForestGreen}{[+1.1]} & \textbf{56.1}\textcolor{ForestGreen}{[+0.6]}\\
\bottomrule

\label{tab:image_comparison}
\end{tabular}
}
\end{table*}

\begin{table*}[t]
\centering
\scriptsize
\caption{Comparison of performance of \ours{} on CoT-SFT/RL trained models.}
\setlength{\tabcolsep}{2pt}
\resizebox{0.8\textwidth}{!}{%
\begin{tabular}{l c c c c c c c c}
\toprule
\textbf{Model} & \textbf{VSI-Bench} & \textbf{V.MMMU} & \textbf{MMVU} & \textbf{MVBench} & \textbf{TCompass} & \textbf{V.MME} & Avg & Avg wo mra\\
\midrule
COT-SFT-7B  & 32.5 (\textbf{32.2}/32.8) & 43.7 & 59.2 & 60.9 & 69.5 &  54.9 & 50.4& 53.4\\ %
\quad+\ours{}  & \textbf{32.6} (32.1/\textbf{33.1}) & \textbf{45.1}\textcolor{ForestGreen}{[+1.4]} & \textbf{61.8}\textcolor{ForestGreen}{[+2.6]} & \textbf{62.0}\textcolor{ForestGreen}{[+1.1]} & \textbf{69.6}\textcolor{ForestGreen}{[+0.1]} &  \textbf{55.1}\textcolor{ForestGreen}{[+0.2]} & \textbf{51.3}\textcolor{ForestGreen}{[+0.9]}& \textbf{54.3}\textcolor{ForestGreen}{[+0.9]}\\ %
\midrule
Video-R1-7B  & \textbf{35.6} (\textbf{30.9}/39.2) & \textbf{48.8} & 64.0 & 64.1 & \textbf{73.3} &  58.7 & \textbf{54.1}& 56.6\\ %
\quad+\ours{}  & 34.2 (29.1/\textbf{39.3}) & \textbf{48.8} & \textbf{64.2}\textcolor{ForestGreen}{[+0.2]} & \textbf{64.8}\textcolor{ForestGreen}{[+0.7]} & \textbf{73.3} &  \textbf{59.4}\textcolor{ForestGreen}{[+0.7]} & \textbf{54.1}& \textbf{56.7}\textcolor{ForestGreen}{[+0.1]}\\ %
\bottomrule
\label{tab:video_comparison}
\end{tabular}
}
\end{table*}

\begin{figure}[h]
\scriptsize
\centering
\includegraphics[keepaspectratio, width=0.8\columnwidth]{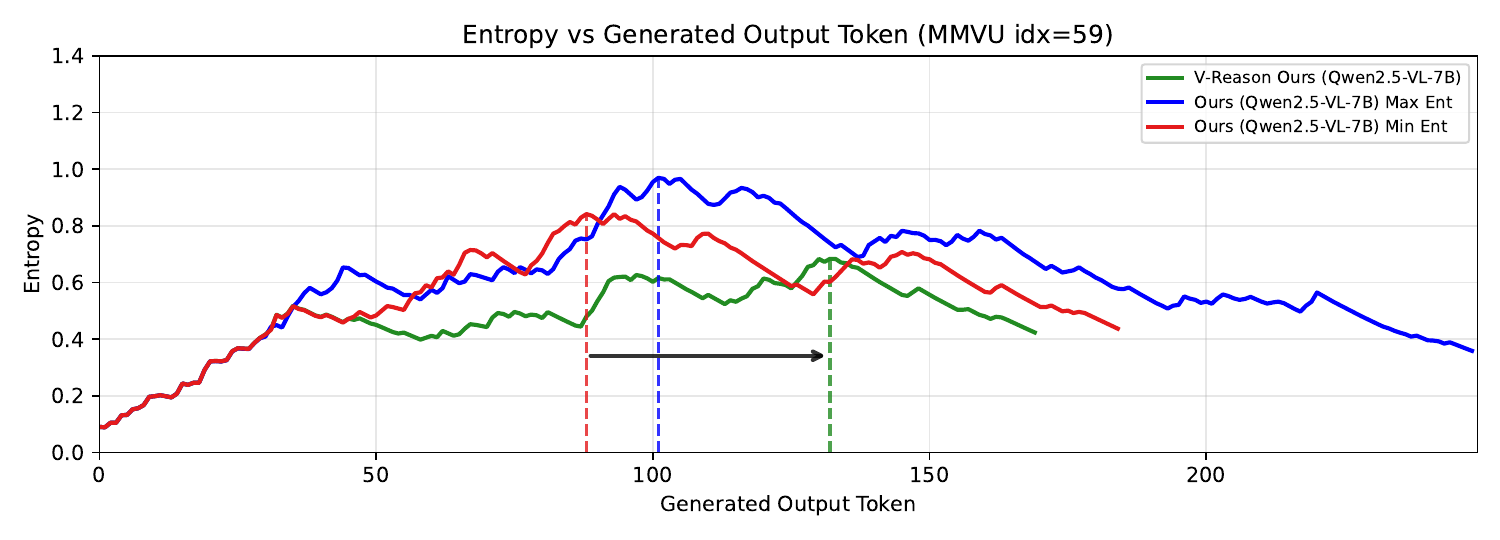}
\caption{Comparison of different optimization objectives. The arrow denotes the shift in the peak, i.e., longer exploration for \ours{}.}
\label{fig:qual}
\end{figure}

\section{Image Understanding Results}

We present image understanding results in Table~\ref{tab:image_comparison}. Our method consistently improves over Qwen2.5-VL-7B, achieving an average gain of \textbf{+0.6\%}, with improvements exceeding \textbf{+1\%} on TextVQA and MMVet. Although these gains are modest, this is expected given the non-reasoning centric nature of these benchmarks and the strong performance of the baseline model (highly saturated).

From the perspective of our framework, these results suggest that the benefits of inference-time optimization are not limited to video reasoning, but extend to general visual reasoning tasks as well. In particular, the improvements indicate that better control of the model’s exploration–exploitation behavior at inference can elicit more reliable reasoning even when the underlying knowledge is already present. Importantly, these gains are achieved without any RL or additional training, reinforcing the effectiveness of our training-free approach as a lightweight and broadly applicable enhancement to existing multimodal models.

\section{Results on SFT/RL trained models}

Our objective is \textbf{not} to replace reinforcement learning (RL), but to introduce a \emph{training-free, inference-time optimization framework} that improves reasoning efficiency and accuracy by better exploiting the pretrained model’s latent capabilities. This perspective is consistent with the view of reasoning as a search process over existing knowledge, where improved control at inference can elicit stronger reasoning behavior without additional supervision. Importantly, we do not claim to fundamentally expand the model’s reasoning capacity, but rather to surface and refine it through more effective exploration–exploitation dynamics.

As shown in Table~\ref{tab:video_comparison}, \ours{} consistently improves both CoT-SFT and RL-trained models. The gains are naturally smaller for RL models, which already exhibit more optimal reasoning behavior, but remain consistent in the range of \textbf{+0.2-0.7\%}. This aligns with our hypothesis that RL primarily improves the search process, which our method approximates at inference time through controlled optimization. In contrast, larger gains are observed for CoT-SFT models (up to \textbf{+0.9\%}), where reasoning behavior is less optimized and thus benefits more from improved inference-time control.

Notably, these improvements are achieved without any additional training, supervision, or modification of model parameters, highlighting the effectiveness of entropy-guided inference-time adaptation. Furthermore, our method remains complementary to existing training paradigms, suggesting that it can be seamlessly integrated with both SFT and RL-based approaches. We also highlight the substantial \textbf{+5.1\%} gain on Qwen-3-VL-8B (Table 2 main paper), demonstrating that inference-time optimization can yield significant improvements even for strong pretrained models.

\section{Ablation Studies}\label{app:ablations}

In this section, we present additional ablation studies to assess the impact of hyperparameters used during inference-time optimization, including optimization step-size (update frequency), and we further analyze the frequency of alpha values before entropy maxima.

\paragraph{Pruning-Only.}
Table~\ref{tab:ablation_lr_pruning} compares \ours{} to a baseline model that implements pruning only. This shows that it is effective in maintaining the original performance with only \textbf{-0.2\%} decrease on average across all datasets. Surprisingly, it also has small gains over the baseline on the VSI-Bench and TempCompass datasets. When \ours{} is combined with pruning, the average gain (without MRA) increases from $-0.2$ to $\textbf{1.3}$. This shows that the reasoning gains derive mostly from the inference optimization. 
\paragraph{Learning rate.} Table~\ref{tab:ablation_lr_pruning} also reports results for \ours{} with a fixed learning rate of \texttt{3e-4} across six datasets. The method maintains the average performance reported in Table 1 under this setting with similar gains observed on VSI-Bench, VideoMMMU, and MMVU datasets and only a negligible drop in the performance on MVBench, Tempcompass, and VideoMME datasets, highlighting its robustness to variations in optimization hyperparameters.

\paragraph{Optimization Step-size.}
Figure~\ref{fig:mmvu_step_ablation} shows an ablation on optimization step-size on MMVU dataset. It shows that accuracy increases with decreasing step-size. Since  smaller step-sizes correspond to more optimization steps, there is a trade-off between  efficiency and accuracy (fewer steps lead to faster inference). Notably, \ours{} outperforms the base model for all step-sizes, demonstrating that even a few optimization steps can guide the model towards improved reasoning paths. 

\paragraph{Optimization Objective.} Table 3 in the paper showed that the proposed entropy-switching objective is better than alternatives such as min-entropy or max-entropy losses. Fig.~\ref{fig:qual} now shows how the alternative optimization objectives fail to reproduce the characteristic entropy dynamics (later peak, lower final entropy) associated with better reasoning, supporting the hypothesis that structured entropy regulation is key.

\paragraph{Alpha Switching.}

Figure~\ref{fig:alpha_histogram} shows the histogram of alpha values before the EMA peak is attained. \ours{} sacrifices a few micro-exploration steps ($\alpha=1$) for a substantially larger number of micro-exploitation steps ($\alpha=-1$), suggesting that it pursues more alternative paths during macro-exploration. This lengthens the macro exploration stage and delays the overall entropy peak.

\section{Entropy Curves with Other Models}\label{sec:othermodels}

\paragraph{Entropy Analysis of Llama Models.}
We extend our analysis to the Llama family in Figure~\ref{fig:llamaentropy}, examining Llama 3.1-8B-Instruct and Llama 3.2-3B-Instruct. It shows the entropy curves averaged over a subset of 100 samples on the MATH dataset. The observed entropy trajectories demonstrate macro-exploration and macro-exploitation phases that include the micro-exploration and micro-exploitation cycles as described in the paper. Figure~\ref{fig:llamaentropy} shows that our insights are not restricted for Qwen-based architectures. Consistent with our prior observations, the larger, higher-performing model exhibits a delayed entropy maximum compared to the smaller model, suggesting that our approach generalizes across diverse LLM architectures. This consistency across model families provides evidence that our entropy-based characterization extends beyond a single architectural lineage. We plot the models separately as the absolute entropy values are not directly comparable owing to differences in pretraining procedures and tokenization schemes.

\paragraph{Entropy Analysis of Phi Models.}
Continuing our cross-architecture analysis, we examine Phi 3-mini-4k-Instruct in Figure~\ref{fig:phientropy} by averaging over a subset of 100 samples on the MATH dataset. The generated output token entropy behavior follows the same trend observed in both Qwen and Llama families with the macro-exploration/exploitation structure, providing additional evidence for the generality of these patterns.

\begin{figure*}[h]
    \centering
    \begin{subfigure}[b]{0.48\linewidth}
        \centering
        \includegraphics[width=\linewidth]{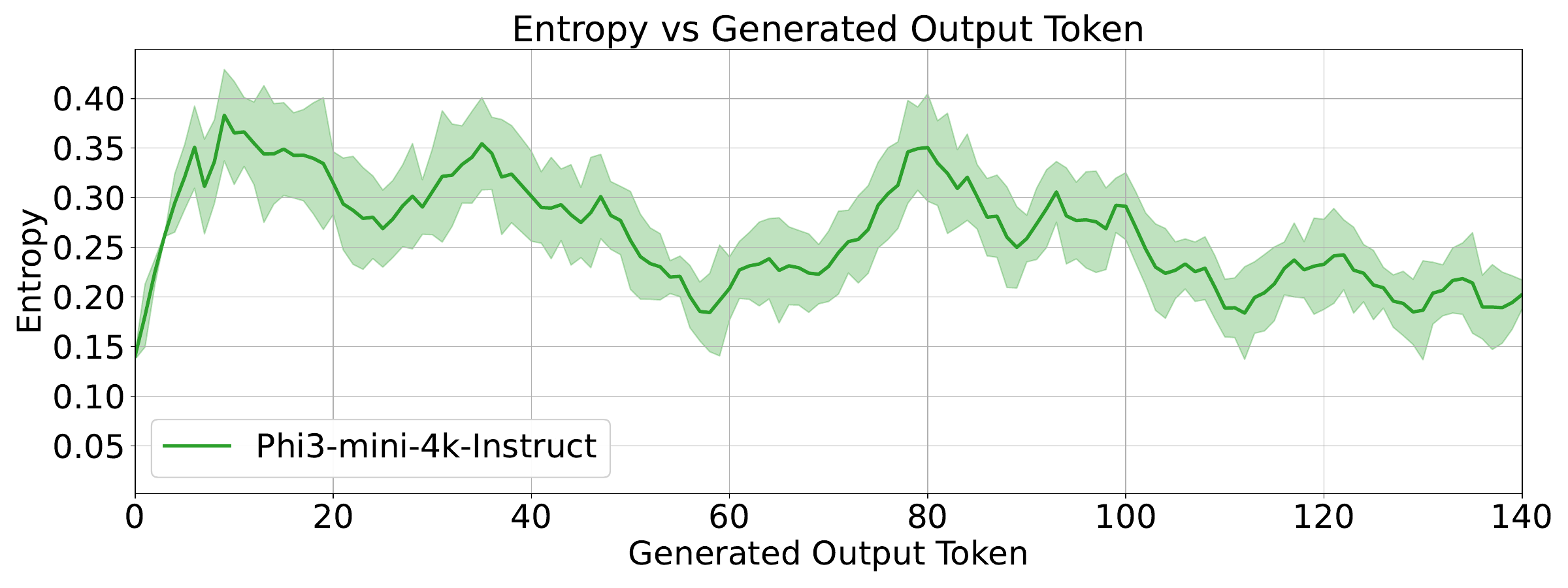}
        \caption{Phi 3}
        \label{fig:phi3}
    \end{subfigure}
    \hfill
    \begin{subfigure}[b]{0.48\linewidth}
        \centering
        \includegraphics[width=\linewidth]{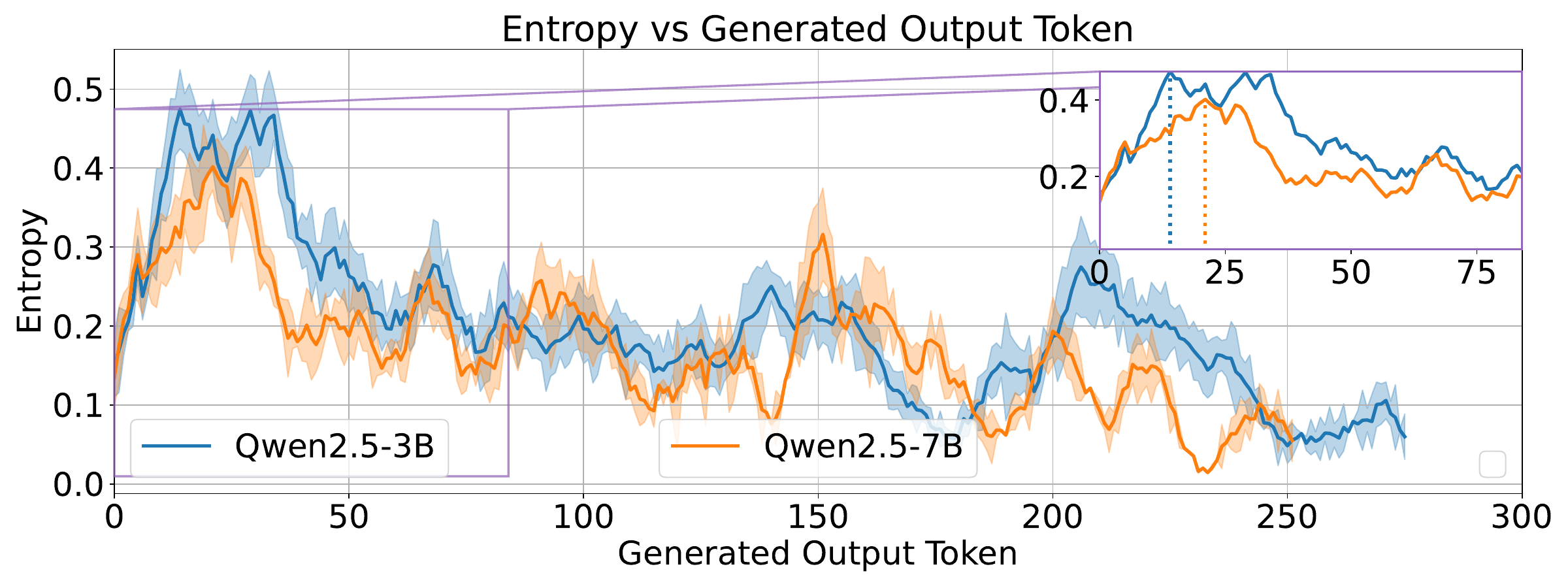}
        \caption{Qwen 2.5}
        \label{fig:qwen25}
    \end{subfigure}

    \caption{\textit{Phi and Qwen analysis}: Entropy of the output distribution averaged over 100 samples of the MATH dataset~\citep{qwenmath}. (a) Similar to Qwen LMMs, we observe clear macro-exploration and macro-exploitation phases (with micro-exploration and micro-exploitation) with the Phi model. (b) Qwen LLM family also shows the similar pattern observed in the LMMs with the better model showing delayed entropy maxima and lower entropy peak.}
    \label{fig:phientropy}
    \label{fig:llmentropyIntro}
\end{figure*}

\paragraph{Entropy Analysis of Qwen LLMs.}
Figure~\ref{fig:llmentropyIntro} shows the entropy curves of Qwen2.5 based LLMs averaged over a subset of 100 samples on the MATH dataset. It shows that the larger 7B model (more accurate) shows a delayed peak and lower entropy overall. These trends are consistent to those observed in the video models suggesting that the proposed approach can potentially be extended to LLMs. We leave this for future work since direct application of our value-cache controller is not feasible with text-only inputs as they lack the distinct modality embeddings (video tokens) that provide optimization targets in multimodal models. Addressing this requires non-trivial contributions which is beyond the scope of the current work. However, it is very exciting to see similar macro-exploration and macro-exploitation trends with cycles of micro-exploration and micro-exploitation that cause a delayed entropy maximum for the better model across architectures and even for LLMs.

\section{Limitations}\label{sec:limitations}

Although \ours{} demonstrates consistent improvements across benchmarks, there are certain limitations.
First, our approach relies on the knowledge of the pretrained model to explore alternative paths during the thinking process and so for certain tasks that are under-represented in the pretrained model, \ours{} can only provide modest gains compared to training-based approaches. 

Second, the Lite variant, while improving memory efficiency, incurs a measurable drop in accuracy for medium and long-duration videos, suggesting that pruning may discard valuable temporal information for those cases. Such limitations  can be investigated as future work, as described next.

\section{Future Work}\label{app:future_works}
To our knowledge, \ours{} is the first work that targets the \textit{video reasoning without training} problem. Hence, a number of exciting avenues exist for future research. 

First, our entropy-based objective is applied only at inference time; integrating it into model training could potentially yield stronger gains and is an avenue for potential future research. Other directions of future research include exploring alternative inference-time metrics and loss functions that can further enhance reasoning. 

Second, as a training-free framework, our method does not leverage task-specific supervision, which may limit its ability to capture nuanced reasoning strategies compared to reinforcement learning-based approaches. Hence, a combination of supervised finetuning and inference-time optimization-based reasoning techniques can also be explored in the future. Additionally, tailored solutions that can handle longer videos for the Lite variant can also be investigated.

Finally, although our proposed approach is motivated for videos, the idea of entropy-based inference-time optimization for enhanced reasoning is generic and can be extended to large language models (LLMs). We conducted a preliminary analysis of the entropy behavior of language models for MATH reasoning tasks and observed similar trends as the video models as discussed in section~\ref{sec:othermodels}.

\section{Additional Qualitative results}\label{sec:qualitative_appendix}

Figures \ref{fig:qual_3a}, \ref{fig:qual_3b}, \ref{fig:qual_3d}, \ref{fig:qual_3e}, \ref{fig:qual_3j}, and
\ref{fig:qual_3k} shows additional examples where the baseline Qwen-2.5-VL-7B failed to arrive at the correct solution while \ours{} arrived at the correct answer following an alternative reasoning path. The examples span a variety of topics in science, engineering, action recognition and counting. In particular, Figure~\ref{fig:qual_3j} shows an example where the base model becomes distracted by irrelevant context while \ours{} correctly interprets the frames. Similarly, for the counting task in Figure~\ref{fig:qual_3k}, the base model provides confident yet incorrect totals, while \ours{} avoids these mistakes and arrives at the correct answer through effective comparison.

 Figures \ref{fig:qual_3f}, \ref{fig:qual_3g}, \ref{fig:qual_3h}, and \ref{fig:qual_3i} show examples where both the baseline Qwen-2.5-VL-7B and \ours{} arrive at the correct answer while going through similar or alternative reasoning traces. In all these examples, \ours{} shows the consistent trend of longer exploration (delayed peak) and lower overall entropy induced by the micro-exploration and micro-exploitation cycles in our proposed optimization objective. In particular, Figure~\ref{fig:qual_3f} shows that \ours{} arrives at the correct solution using fewer output tokens as compared to the baseline as illustrated in Figure 1(d). Interestingly, in Figure~\ref{fig:qual_3h} \ours{} uses more output tokens to provide the correct answer as compared to the baseline. However, \ours{} results in a more confident answer as seen from the lower overall entropy as compared to the baseline. This trend of higher confidence and lower overall entropy is seen in all the examples of \ours{} suggesting the effectiveness of our macro-exploitation phase induced by our proposed objective function.

\begin{figure*}[h]
\centering
\includegraphics[keepaspectratio, width=\textwidth]{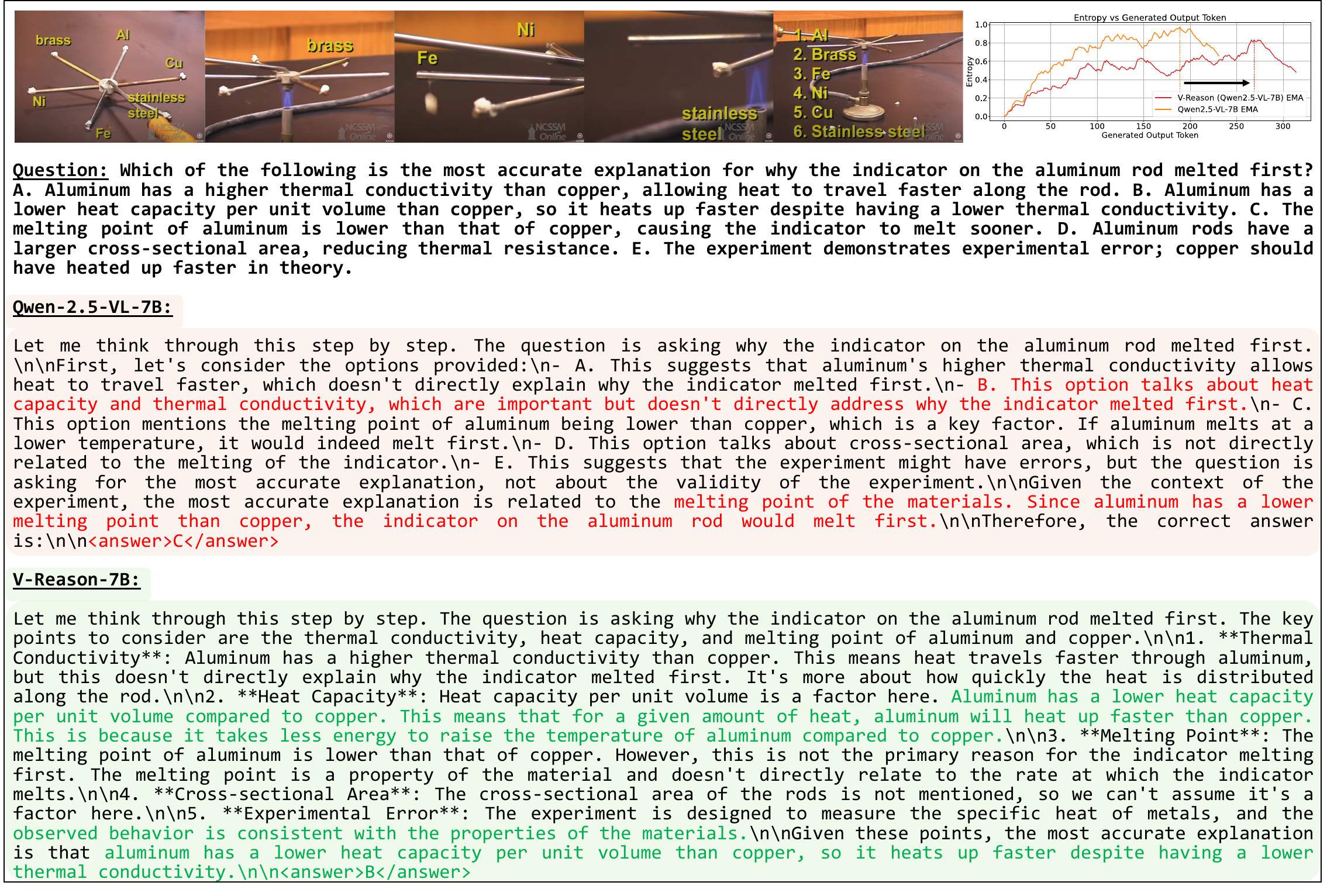}
\caption{Qualitative result: An example output and comparison with the baseline Qwen-2.5-VL-7B. Black arrow denotes the shift in the peak demonstrating longer exploration for \ours{}-7B.}
\label{fig:qual_3a}
\end{figure*}

\begin{figure*}[h]
\centering
\includegraphics[keepaspectratio, width=\textwidth]{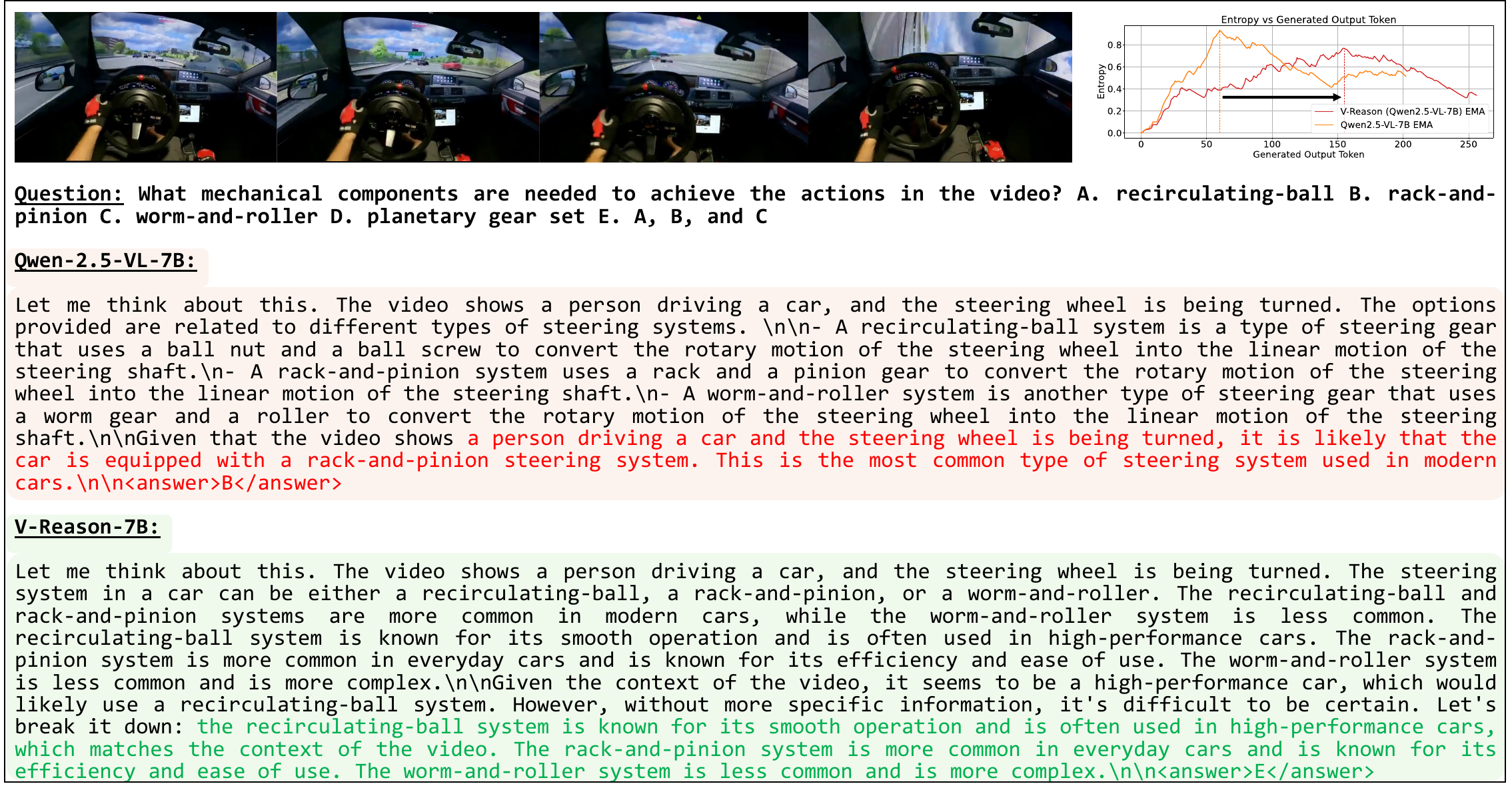}
\caption{Qualitative result: An example output and comparison with the baseline Qwen-2.5-VL-7B. Black arrow denotes the shift in the peak demonstrating longer exploration for \ours{}-7B.}
\label{fig:qual_3b}
\end{figure*}

\begin{figure*}[h]
\centering
\includegraphics[keepaspectratio, width=\textwidth]{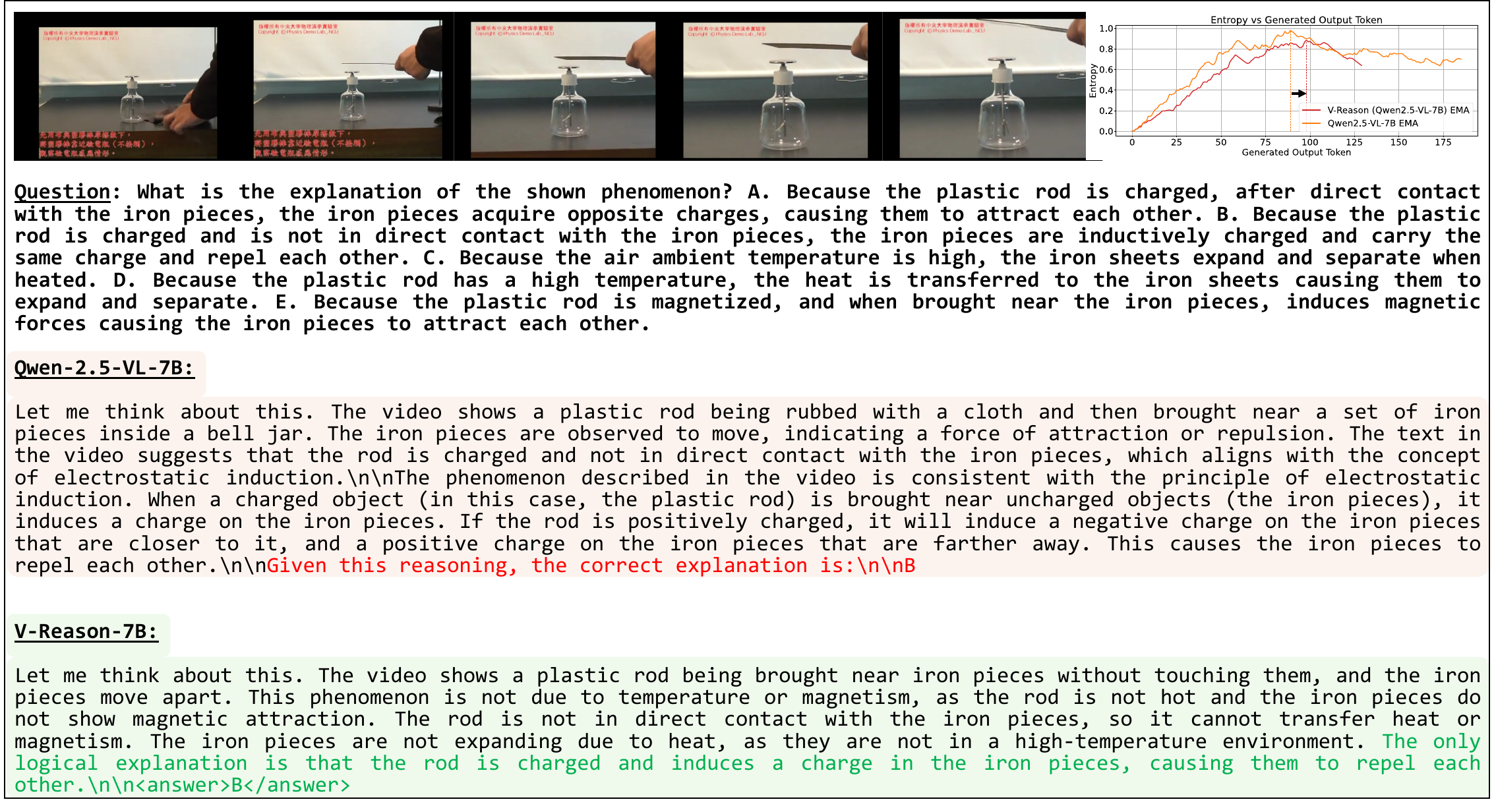}
\caption{Qualitative result: An example output and comparison with the baseline Qwen-2.5-VL-7B. Black arrow denotes the shift in the peak demonstrating longer exploration for \ours{}-7B.}
\label{fig:qual_3d}
\end{figure*}

\begin{figure*}[h]
\centering
\includegraphics[keepaspectratio, width=\textwidth]{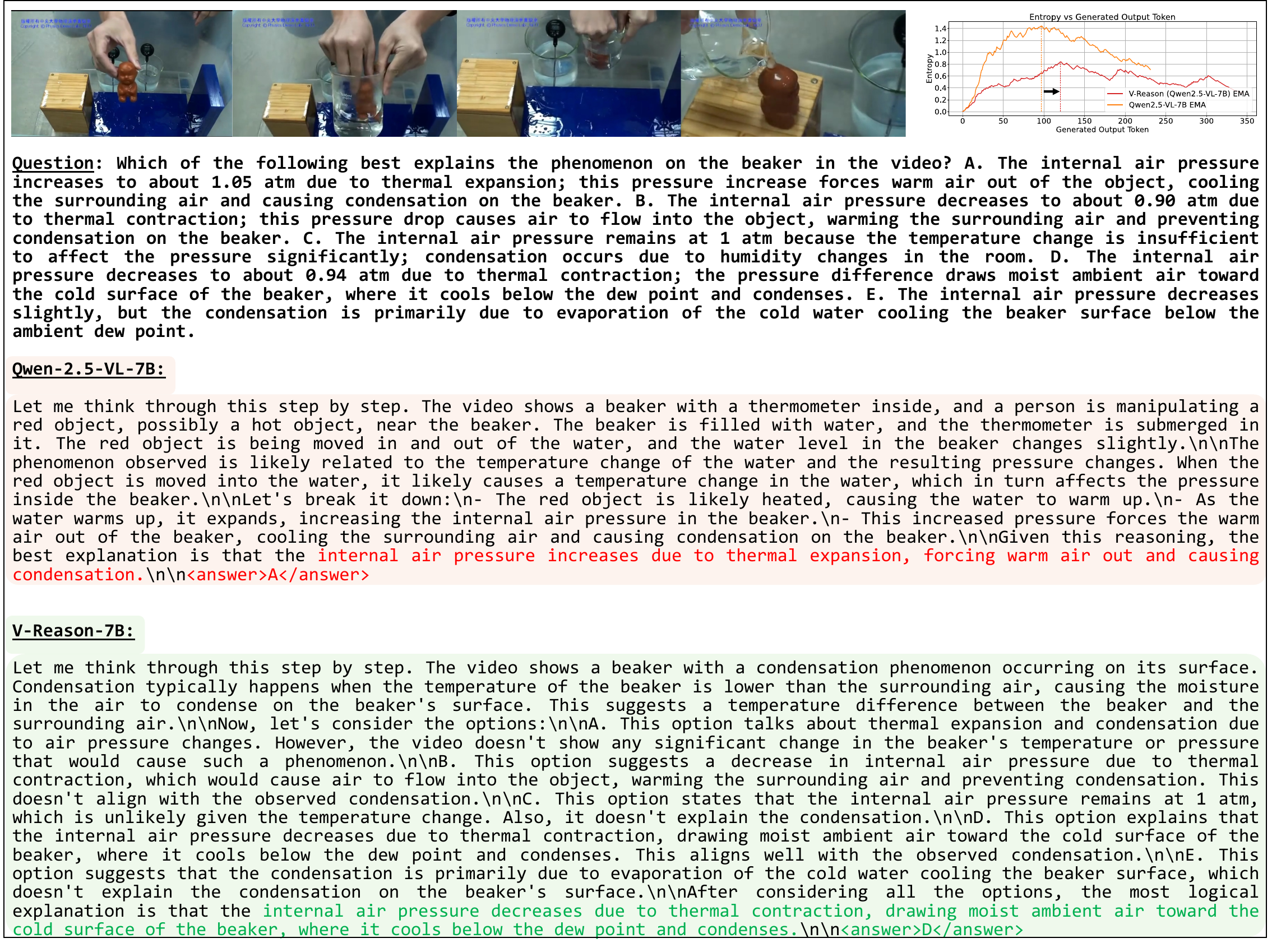}
\caption{Qualitative result: An example output and comparison with the baseline Qwen-2.5-VL-7B. Black arrow denotes the shift in the peak demonstrating longer exploration for \ours{}-7B.}
\label{fig:qual_3e}
\end{figure*}

\begin{figure*}[h]
\centering
\includegraphics[keepaspectratio, width=\textwidth]{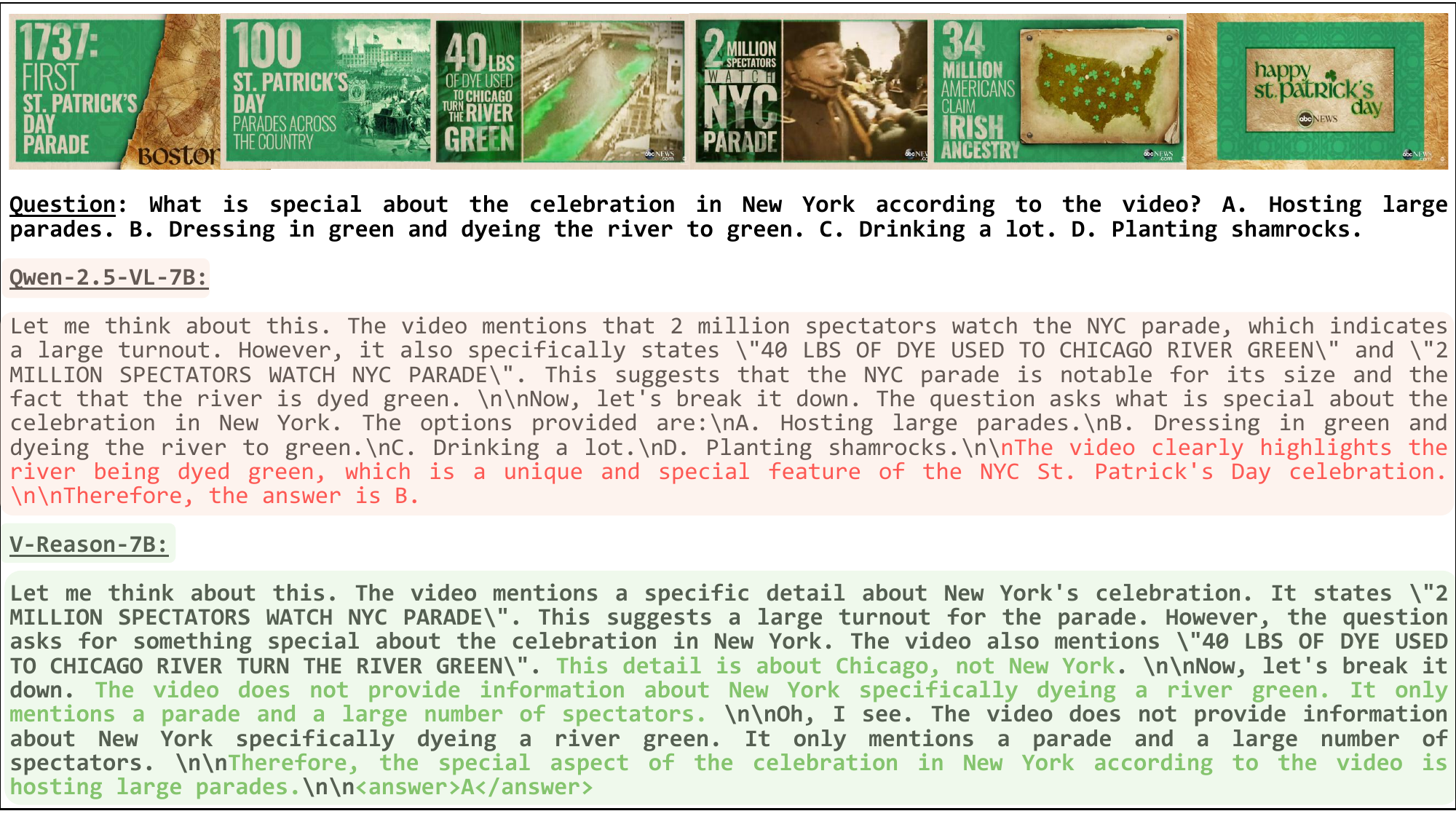}
\caption{Qualitative result: An example output and comparison of \ours{}-7B with the baseline Qwen-2.5-VL-7B.}
\label{fig:qual_3j}
\end{figure*}

\begin{figure*}[h]
\centering
\includegraphics[keepaspectratio, width=\textwidth]{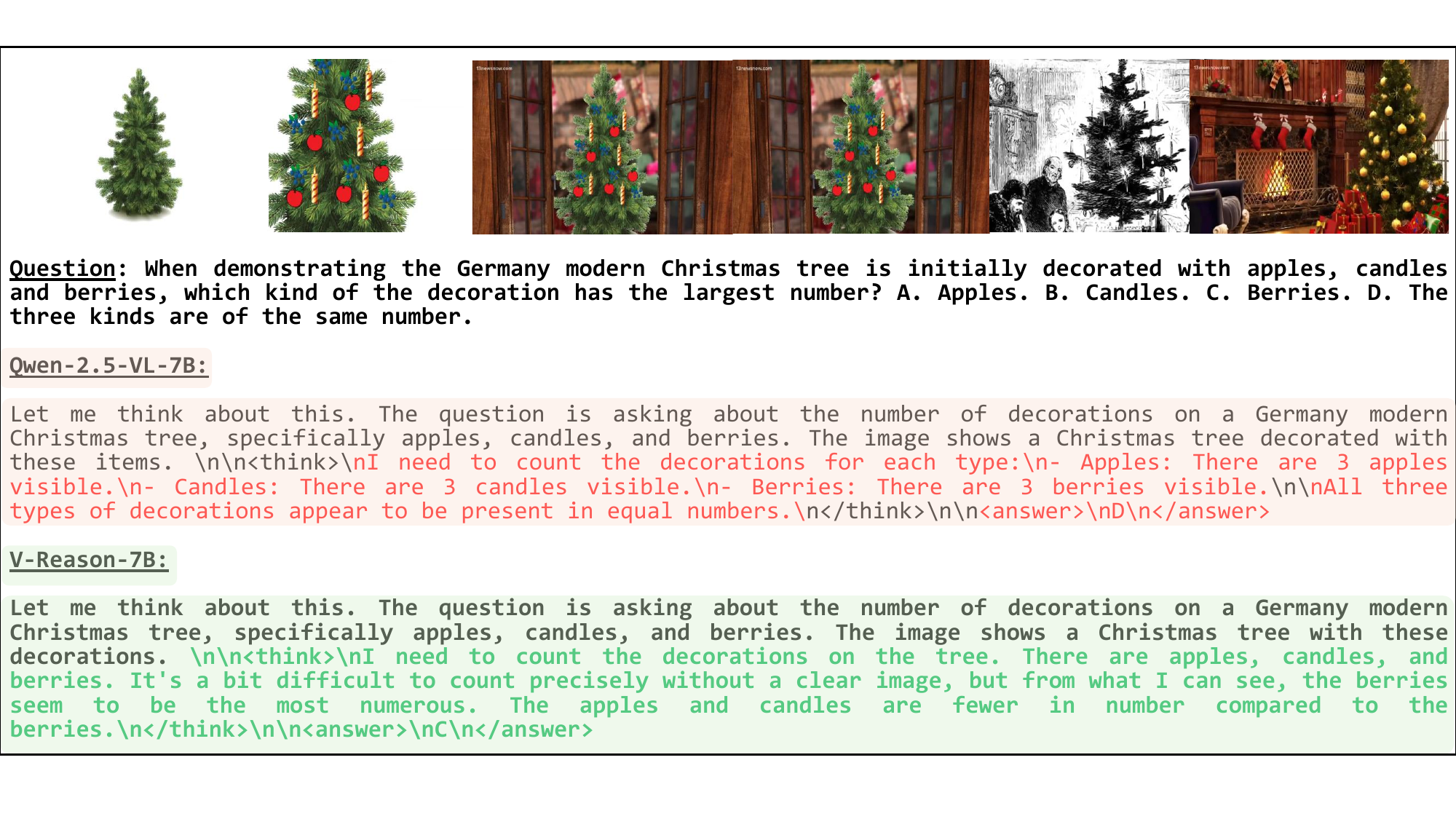}
\caption{Qualitative result: An example output and comparison of \ours{}-7B with the baseline Qwen-2.5-VL-7B.}
\label{fig:qual_3k}
\end{figure*}

\begin{figure*}[h]
\centering
\includegraphics[keepaspectratio, width=\textwidth]{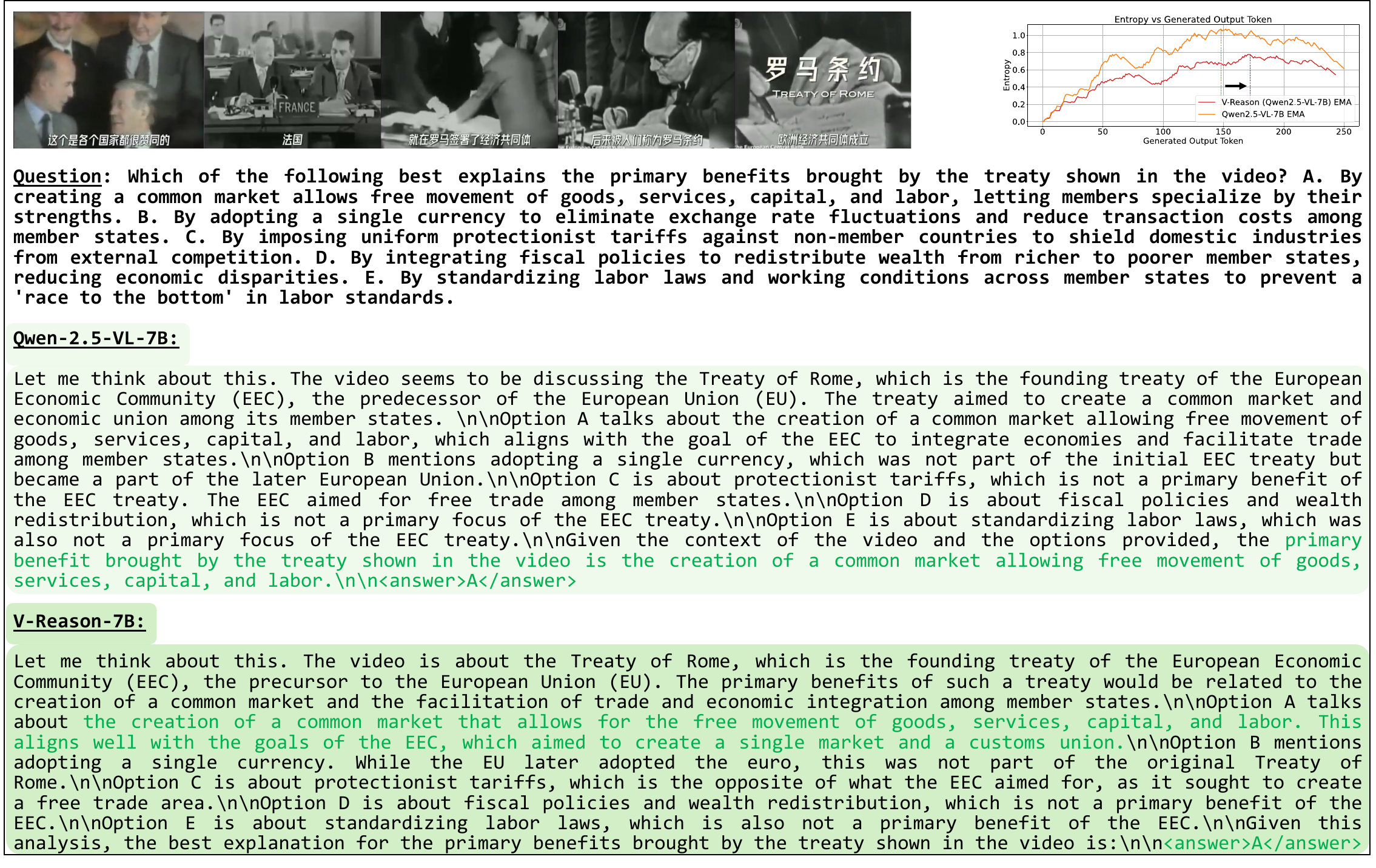}
\caption{Qualitative result: An example output and comparison with the baseline Qwen-2.5-VL-7B. Black arrow denotes the shift in the peak demonstrating longer exploration for \ours{}-7B.}
\label{fig:qual_3f}
\end{figure*}

\begin{figure*}[h]
\centering
\includegraphics[keepaspectratio, width=\textwidth]{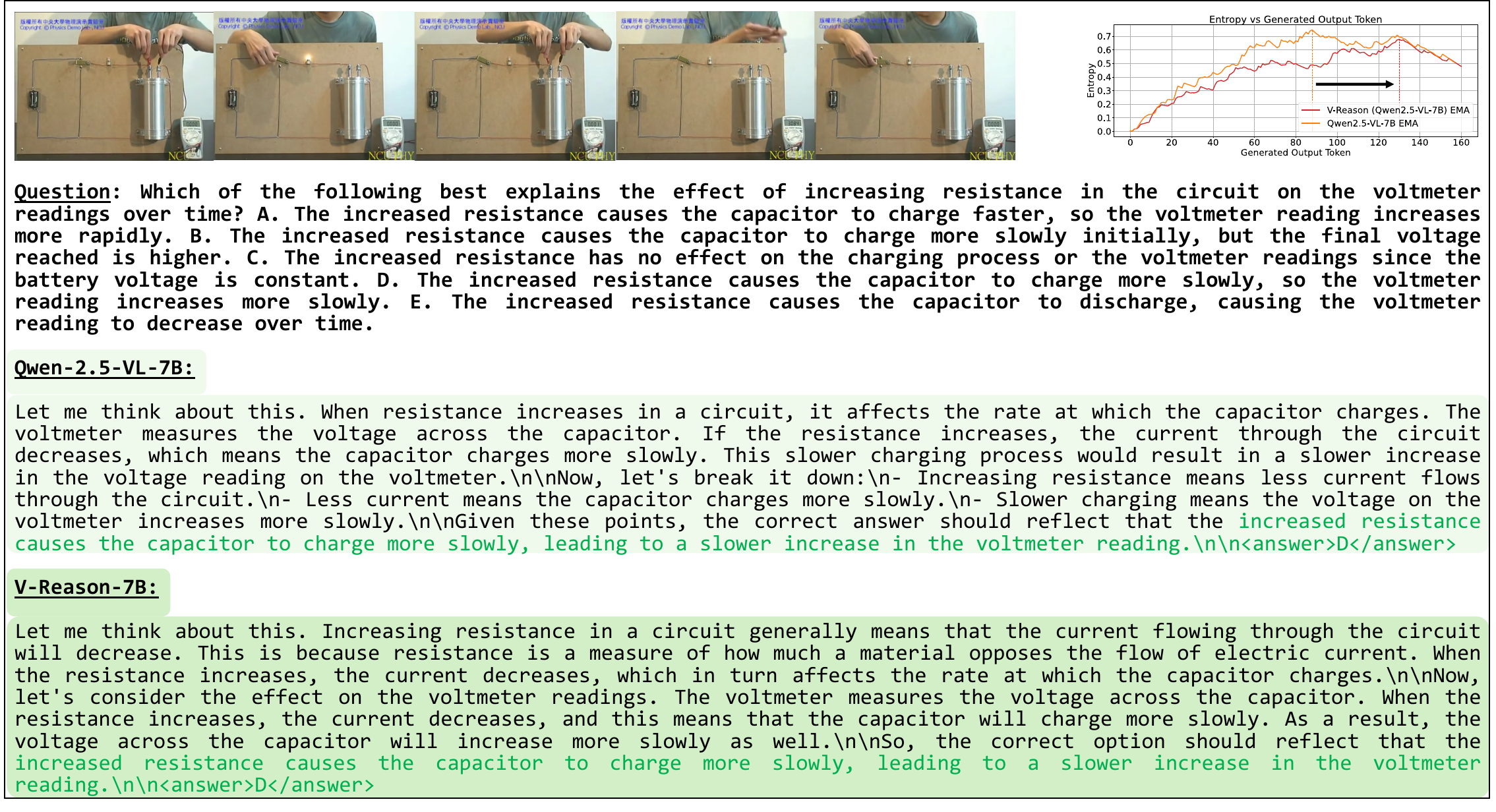}
\caption{Qualitative result: An example output and comparison with the baseline Qwen-2.5-VL-7B. Black arrow denotes the shift in the peak demonstrating longer exploration for \ours{}-7B.}
\label{fig:qual_3g}
\end{figure*}

\begin{figure*}[h]
\centering
\includegraphics[keepaspectratio, width=\textwidth]{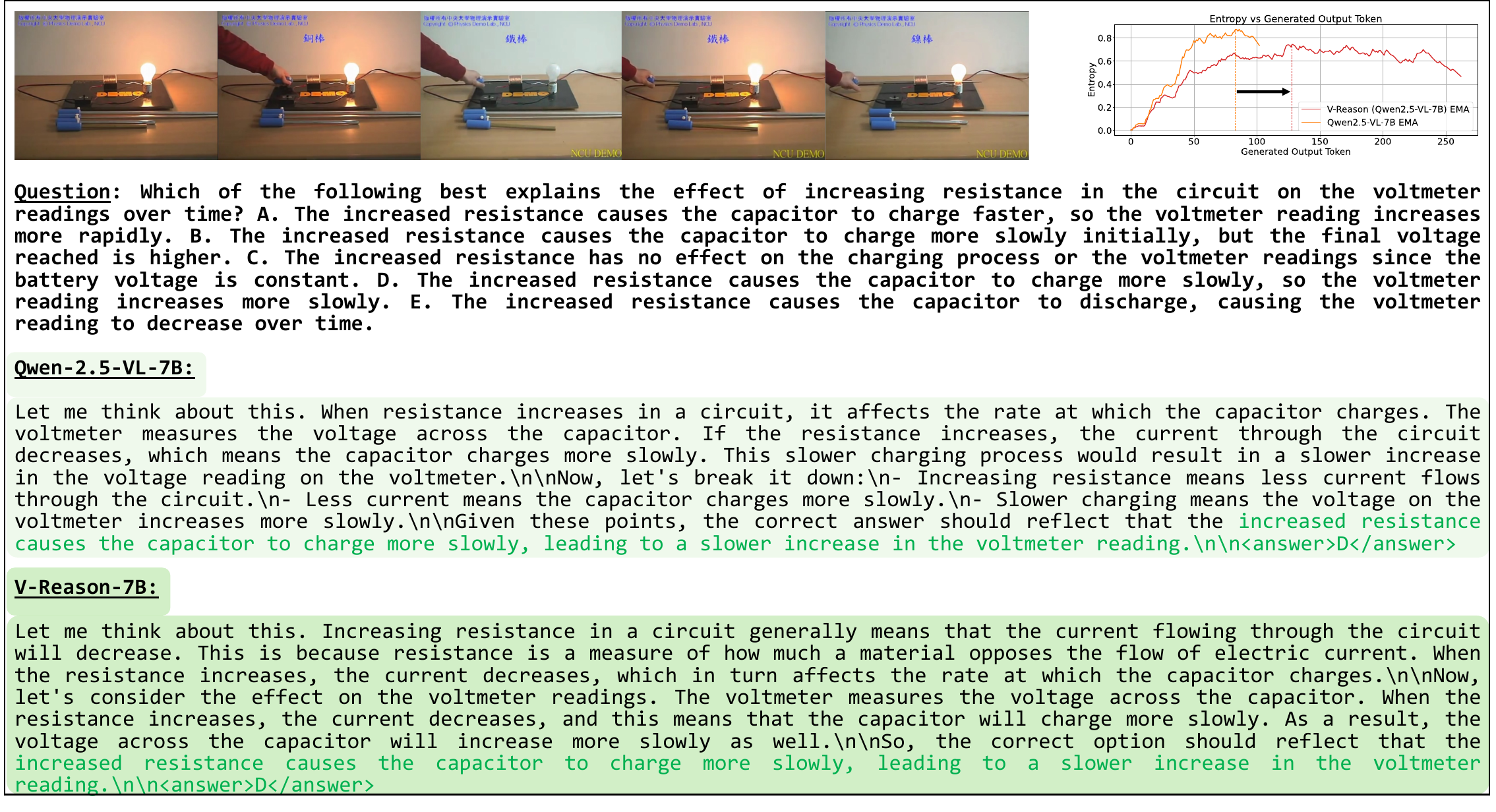}
\caption{Qualitative result: An example output and comparison with the baseline Qwen-2.5-VL-7B. Black arrow denotes the shift in the peak demonstrating longer exploration for \ours{}-7B.}
\label{fig:qual_3h}
\end{figure*}

\begin{figure*}[h]
\centering
\includegraphics[keepaspectratio, width=\textwidth]{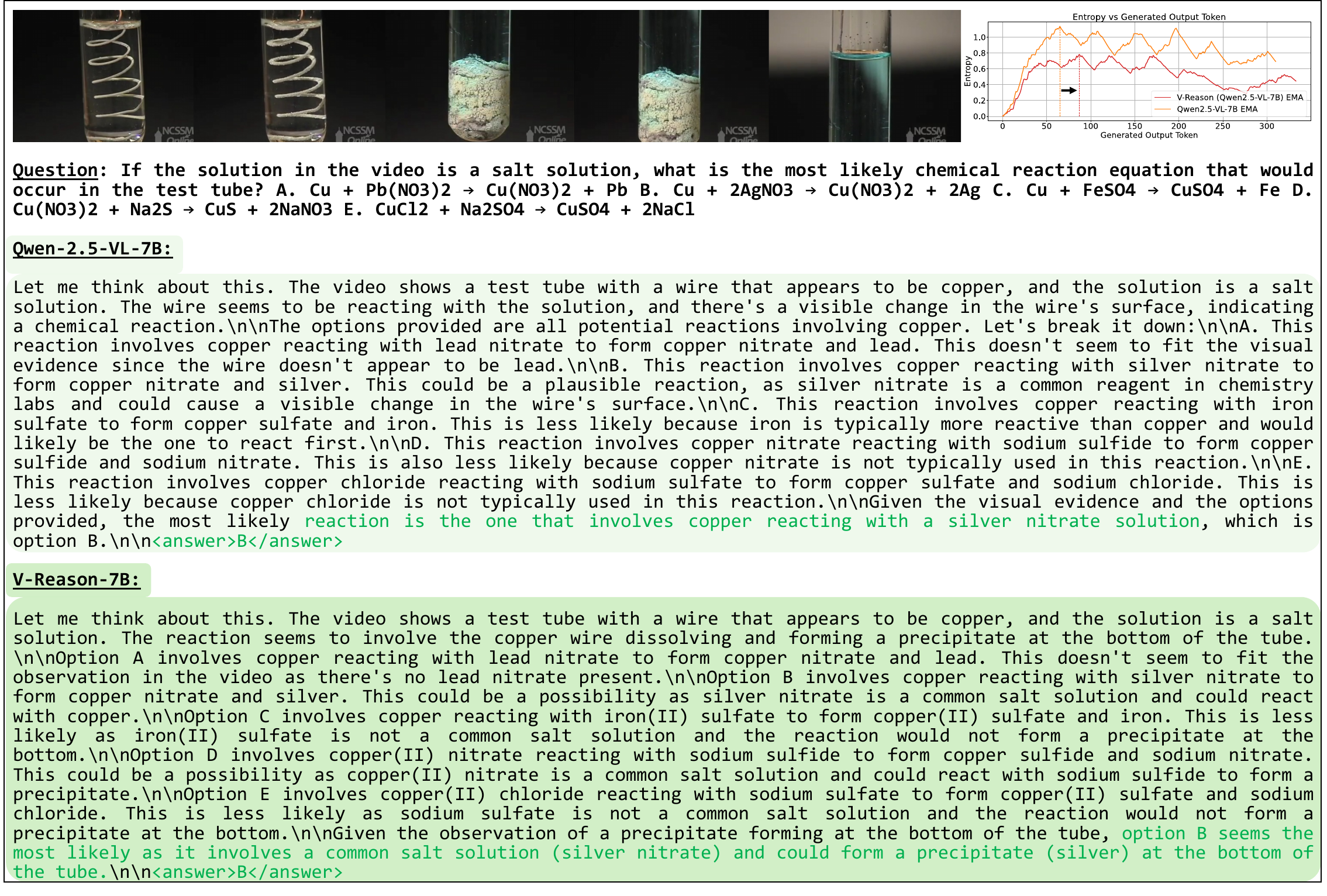}
\caption{Qualitative result: An example output and comparison with the baseline Qwen-2.5-VL-7B. Black arrow denotes the shift in the peak demonstrating longer exploration for \ours{}-7B.}
\label{fig:qual_3i}
\end{figure*}

\end{document}